\documentclass[runningheads]{llncs}
\usepackage[utf8]{inputenc}
\usepackage[table, dvipsnames]{xcolor}

\usepackage[final,year=2026]{eccv}
\usepackage{eccvabbrv}

\usepackage{graphicx}
\usepackage{pdfpages}
\usepackage{booktabs}
\usepackage{multirow}
\usepackage{amsmath,amssymb,mathtools}
\usepackage{bibunits}

\usepackage{pifont}
\definecolor{Gray}{gray}{0.85}
\definecolor{LightPurple}{RGB}{230, 230, 250}
\definecolor{GainColor}{RGB}{70, 90, 145}
\newcommand{\best}[1]{\textbf{#1}}
\newcommand{\gain}[1]{\textcolor{GainColor}{#1}}
\newcommand{\cmark}{{\color{ForestGreen}\ding{51}}}
\newcommand{\xmark}{{\color{red}\ding{55}}}
\makeatletter
  \newcommand\figcaption{\def\@captype{figure}\caption}
  \newcommand\tabcaption{\def\@captype{table}\caption}
\makeatother

\usepackage[accsupp]{axessibility}

\usepackage{mycommands}

\usepackage[breaklinks,colorlinks,citecolor=eccvblue,urlcolor=eccvblue]{hyperref}
\usepackage{orcidlink}

\newcommand{\R}{\mathbb{R}}

\begin{document}
\begin{bibunit}[splncs04]

\title{Fully Rotation-Equivariant Spectral-Spatial Learning for Multispectral Object Detection}
\titlerunning{Fully Rotation-Equivariant Spectral-Spatial Learning}
\author{Peng Zhang\orcidlink{0009-0008-9585-9622} \and Tingfa Xu$^\dagger$\orcidlink{0000-0001-5452-2662} \and Shuaihao Han\orcidlink{0009-0008-2594-1516} \and Jianan Li$^\dagger$\orcidlink{0000-0002-6936-9485}}
\authorrunning{P. Zhang et al.}
\institute{Beijing Institute of Technology, Beijing, China}
\maketitle
\begingroup
\renewcommand{\thefootnote}{\dag}
\footnotetext{Correspondence to: Tingfa Xu and Jianan Li.}
\endgroup

\begin{figure*}[t]
  \centering
  \includegraphics[width=\linewidth]{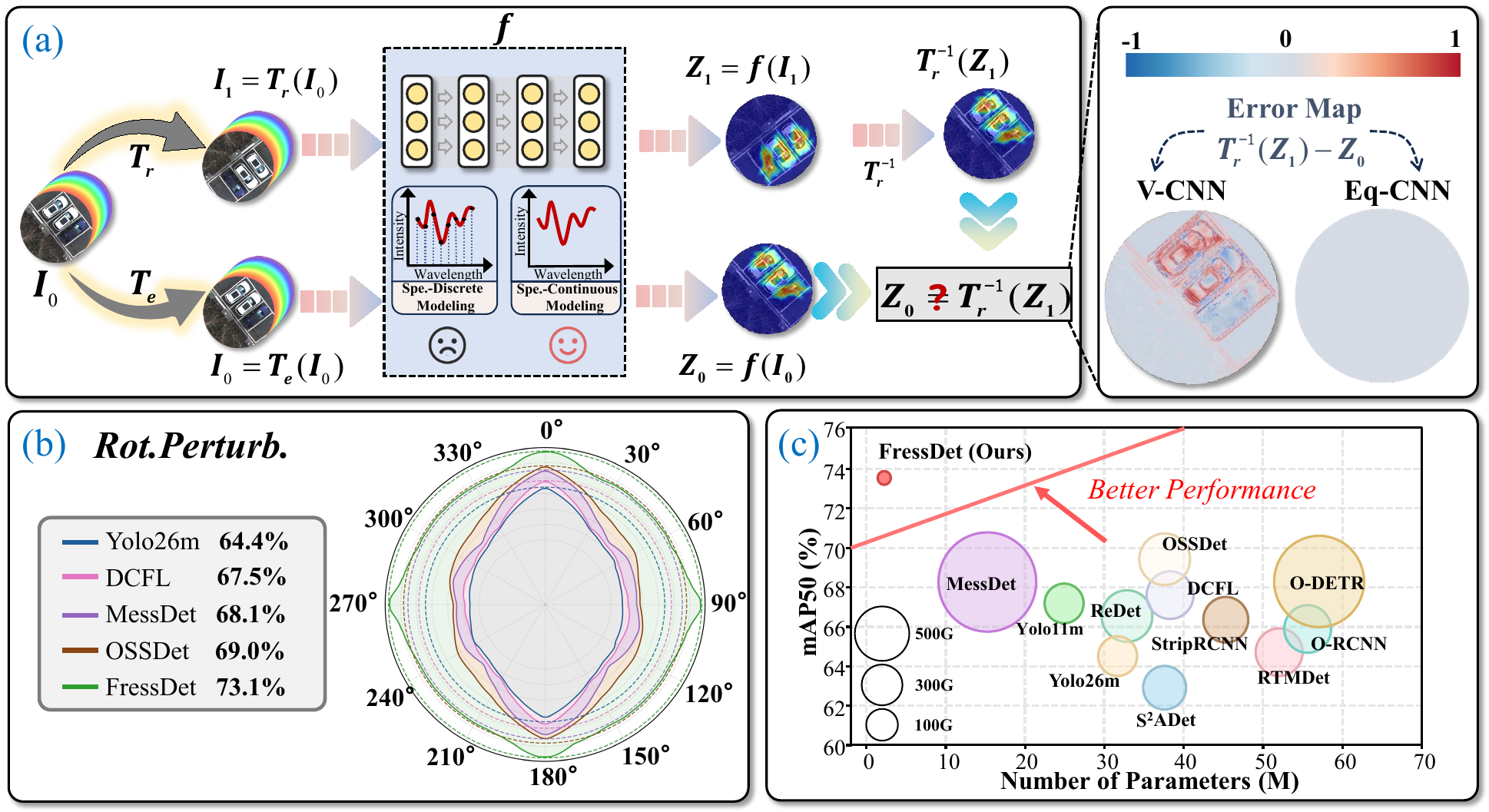}
  \caption{(a) Transformation consistency in vanilla (V-CNN) and equivariant (Eq-CNN) networks. $T_e$ and $T_r$ denote the identity and a rotation, $f$ is a feature extractor, and $Z_0$, $Z_1$ are features of $I_0\!=\!T_e(I_0)$ and $I_1\!=\!T_r(I_0)$. Ideally $T_r^{-1}(Z_1)\!\approx\!Z_0$; the error map confirms Eq-CNN yields lower transformation error. However, existing methods have not yet unified continuous spectral modeling with rotation-equivariant geometric priors. (b) Detection robustness under rotational perturbations at 1° intervals on MODA. FressDet exhibits the best robustness. (c) mAP50 vs.\ parameter count on MODA, where circle size is proportional to FLOPs. FressDet achieves the best performance with the fewest parameters and lowest FLOPs.}
  \label{fig:motivation}
\end{figure*}

\begin{abstract}
Existing multispectral detectors are limited by discrete spectral processing, a scale-dependent shift in the relative reliability of spectral and spatial cues across pyramid levels, and the lack of explicit rotation-equivariant geometric priors for arbitrarily oriented objects. To tackle these limitations, we propose FressDet, a fully rotation-equivariant spectral--spatial learning framework for multispectral object detection, capable of capturing the continuous, ordered nature of spectral structure and enabling reliable spectral--spatial fusion across pyramid levels under arbitrary in-plane rotations. FressDet integrates three complementary components. Spectral Implicit Warp (SpeIW) enables query-based spectral resampling via a coordinate-conditioned implicit field, yielding a monotone, order-preserving warp. Rotation-Equivariant Consistency Weighting (ReCoW) adaptively fuses spectral and spatial branches based on branch reliability, reinforcing informative cues while suppressing noise across pyramid levels. The oriented-aware head exploits group-indexed features to stably predict oriented objects without parameter replication. Taken together, FressDet learns more discriminative and robust spectral--spatial representations even under rotational perturbations. By achieving state-of-the-art performance with 93\% fewer parameters on five public benchmarks, FressDet demonstrates its effectiveness and generalizability.
\par\noindent Code is available at \url{https://github.com/Riiluo/FressDet}.
\keywords{Multispectral object detection \and Equivariant neural networks \and Implicit spectral warping \and Spectral--spatial learning}
\end{abstract}

\section{Introduction}

Multispectral object detection (MOD) improves localization and recognition by jointly exploiting spatial structure and spectral cues. Compared with RGB imagery, multispectral images (MSIs) measure spectral responses at sampled wavelengths, yielding per-pixel spectral signatures with strong inter-band correlation. Prior methods either decouple spectral and spatial information through dimensionality reduction (e.g., PCA~\cite{pearson1901liii} and band selection~\cite{sun2019hyperspectral,zhang2024tensorial}) or rely on attention-based modeling to capture spectral--spatial interactions~\cite{han2025moda}. Although these methods achieve promising performance, two challenges remain in MOD:
(i) discrete spectral processing that neglects the continuous and ordered structure of the spectral dimension;
(ii) a scale-dependent shift in the relative reliability between spectral and spatial cues across pyramid levels, which can degrade fusion performance in complex scenes.

Rotation equivariance is a desirable geometric inductive bias that exploits the intrinsic rotational symmetry and arbitrary orientations of objects, improving representation stability and generalization across diverse visual tasks~\cite{cohen2016group,e2cnn}, including ground-level~\cite{wu2025gbias,wu2025r2det} and aerial~\cite{wu2025measuring,redet} object detection as well as semantic segmentation~\cite{xu2025precm}, yet many multispectral detectors still rely on heavy rotation-based data augmentation in practice. However, extending this inductive bias to MOD requires rethinking the modeling paradigm. In particular, the continuous, ordered spectral structure of MSIs is mismatched with most equivariant designs built on discrete sampling (Fig.~\ref{fig:motivation} (a)). Moreover, reliability-aware pyramid fusion under equivariant architectures is non-trivial in complex scenes. Furthermore, group-indexed interactions increase the computational burden. Therefore, a key challenge in MOD is to explicitly model spectral continuity and enable reliable spectral--spatial fusion across scales, while enforcing rotation-equivariant geometric priors with manageable overhead.

To address the challenges of spectral discreteness, scale-dependent fusion reliability, and the lack of rotation-equivariant geometric priors, we propose FressDet, the first fully rotation-equivariant spectral--spatial learning framework for multispectral object detection. FressDet captures continuous spectral structure, enables reliability-aware pyramid fusion across levels, and stably predicts oriented objects without parameter replication.

To overcome the discreteness of spectral processing, we draw on the universal approximation theorem~\cite{cybenko1989sigmoidal,hornik1991approximation}: implicit neural representations (INRs) parameterize signals as coordinate-conditioned neural functions, enabling approximation of continuous functions with arbitrarily small error. Accordingly, we introduce Spectral Implicit Warp (SpeIW), an efficient spectral operator that models multispectral representations as a coordinate-conditioned function queried along the spectral dimension with Fourier-encoded spectral coordinates~\cite{tancik2020fourierfeatures,rahaman2019spectralbias}. This encoding mitigates the spectral bias of coordinate-based networks, enabling the representation of higher-frequency spectral variations. Specifically, we parameterize spectral warping over rotation-equivariant features via a factorization into coordinate-dependent bases and feature-conditioned coefficients. The bases encode regularities along the spectral dimension, while the coefficients provide pixel-wise modulation of the warping offsets. By predicting strictly positive spectral steps and integrating them cumulatively, we obtain an order-preserving mapping by construction.

Addressing scale-dependent reliability shifts across pyramid levels, we propose Rotation-Equivariant Consistency Weighting (ReCoW), a lightweight refinement module with dual prototype-driven routing. The spectral branch uses soft assignment to capture discriminative spectral patterns, while the spatial branch uses hard grouping to improve geometric coherence. Their rotation-invariant agreement governs an equivariant residual refinement of the input feature, enabling adaptive spectral--spatial integration across pyramid levels.

For arbitrarily oriented objects, we propose an oriented-aware head that exploits feature maps with explicit orientation indices, enabling stable prediction across orientations without parameter replication. This design enhances feature utilization and substantially reduces the parameter count.

Taken together, these components enable FressDet to learn more discriminative and robust spectral--spatial representations. Extensive experiments on five public benchmarks demonstrate that FressDet achieves state-of-the-art accuracy with 93\% fewer parameters than the previous best method (Fig.~\ref{fig:motivation} (c)) and the best robustness even under input rotational perturbations (Fig.~\ref{fig:motivation} (b)). Our contributions are summarized as follows:
\begin{itemize}
  \item[$\bullet$] Present FressDet, the first fully rotation-equivariant spectral--spatial learning framework unifying continuous spectral modeling, reliability-aware pyramid fusion, and stable orientation-aware prediction.
  \item[$\bullet$] Introduce Spectral Implicit Warp for continuous, order-preserving spectral resampling via a coordinate-conditioned implicit field.
  \item[$\bullet$] Propose Rotation-Equivariant Consistency Weighting, which performs adaptive, reliability-aware spectral--spatial fusion across pyramid levels.
  \item[$\bullet$] Design an oriented-aware head for stably predicting arbitrarily oriented objects without parameter replication.
\end{itemize}

\section{Related Work}

\nbf{Multispectral Object Detection}
Multispectral imagery provides spectral cues beyond RGB data, facilitating more discriminative object representations. Existing methods either separate spectral and spatial streams with cross-branch fusion~\cite{yan2021object,hod3k} or integrate them in a unified design~\cite{han2025moda}. However, most treat spectral features independently, leaving the continuous and ordered nature of spectral structure insufficiently modeled, while scale-dependent shifts in spectral--spatial reliability across pyramid levels remain unaddressed. In contrast, our method explicitly models spectral continuity and enables reliability-aware spectral--spatial fusion across pyramid levels.

\nbf{Rotation-equivariant Networks}
Since Cohen~\etal~\cite{cohen2016group} introduced G-CNNs, rotation-equivariant networks have shown strong performance across diverse vision tasks. In object detection, ReDet~\cite{redet} pioneered rotation-equivariant aerial detectors, and subsequent works~\cite{wu2025measuring,wu2025r2det} advanced equivariant architectures. However, these methods are only approximately equivariant. Lee~\etal~\cite{fred} present FRED, the first fully rotation-equivariant detector with deformable convolutions, though its group-indexed interactions incur considerable computational overhead. We develop the first fully rotation-equivariant spectral--spatial learning framework for multispectral object detection, integrating spatial geometry with discriminative spectral cues while maintaining manageable overhead.

\nbf{Implicit Neural Representations}
Implicit neural representations (INRs) parameterize signals as coordinate-to-value neural fields, achieving success in 3D scene modeling~\cite{nerf} and continuous image representation~\cite{liif,ultrasr}. In spectral imaging, INRs have been applied to spectral rendering~\cite{spectralnerf} and multispectral--hyperspectral fusion~\cite{feinfn}. However, prior INR methods primarily address reconstruction or fusion tasks, whereas multispectral detection requires modeling spectral continuity under a strict order-preserving constraint. Our method addresses this by introducing an order-preserving spectral warping mechanism that maintains rotation equivariance.
\section{Method}
In this section, we introduce the preliminaries on rotation equivariance and describe how the components of FressDet are assembled into a fully rotation-equivariant pipeline (Fig.~\ref{fig:network}).
A detailed equivariance analysis is provided in the supplementary material.

\begin{figure*}[t]
  \centering
  \includegraphics[width=\linewidth]{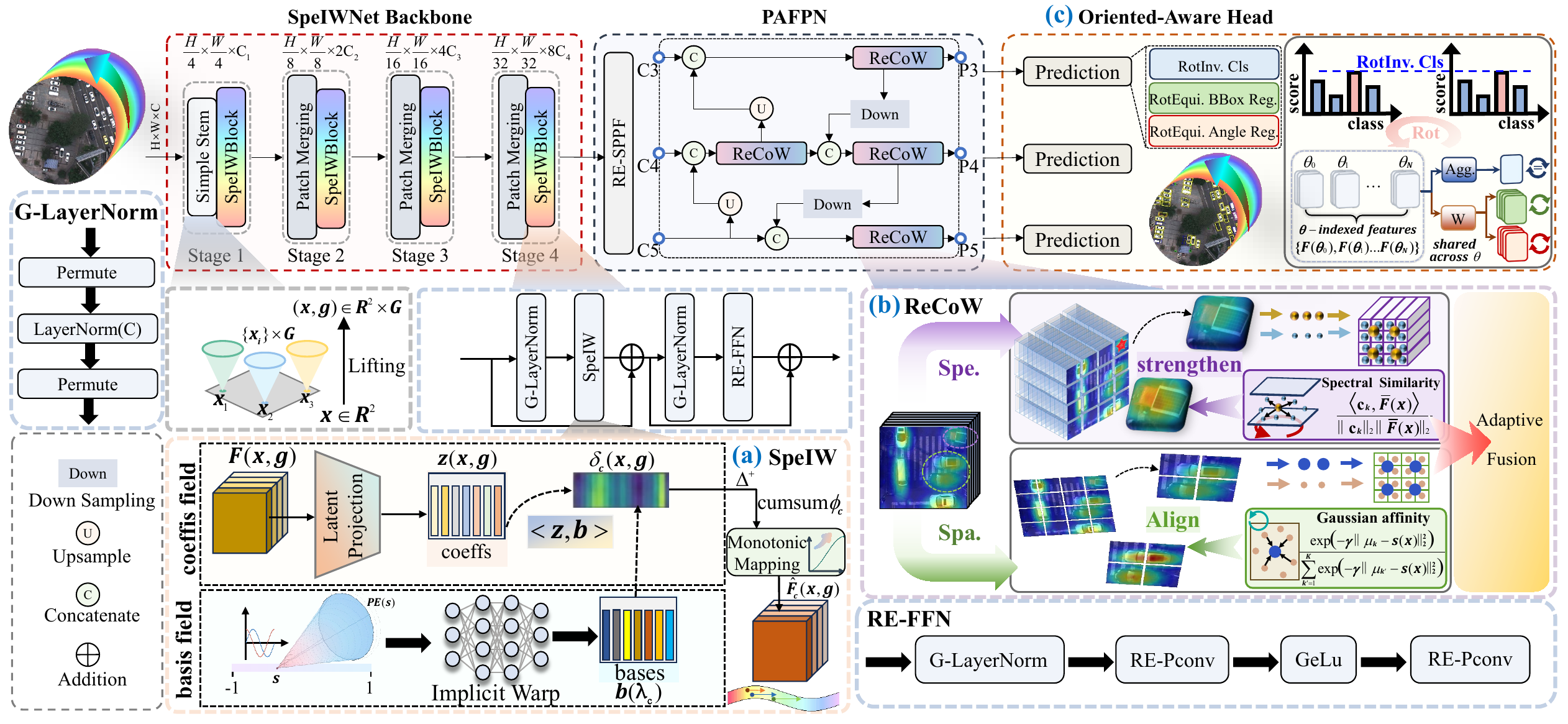}
  \caption{The overall framework of FressDet. (a) SpeIW resamples spectral features via a continuous, order-preserving warp. (b) ReCoW improves spectral--spatial fusion across pyramid levels through complementary routing and consistency weighting. (c) The oriented-aware head stably predicts oriented objects without parameter replication.}
  \label{fig:network}
\end{figure*}
\subsection{Preliminaries}
\label{sec:prelim}

\nbf{Symmetry Groups and Equivariance}
We consider equivariance to discrete in-plane rotations within the Euclidean group $\mathrm{E}(2)=\mathbb{R}^{2}\rtimes \mathrm{O}(2)$, the semi-direct product of translations and orthogonal transformations.
By restricting to the cyclic subgroup $\mathrm{C}_N=\{r_k \mid k=0,\dots,N{-}1\}$ of $N$ equally spaced rotations, we sample features at $N$ discrete orientations.
We denote this group by $G$ ($|G|=N$) in the sequel.
A function $\Phi\colon X\!\to\!Y$ is equivariant with respect to $G$ if:
\begin{equation}
  \Phi\!\bigl(\rho_{g}^{X}(x)\bigr)
  = \rho_{g}^{Y}\!\bigl(\Phi(x)\bigr),
  \quad \forall\, g\in G,\; x\in X,
  \label{eq:equiv}
\end{equation}
where $\rho_{g}^{X}$ and $\rho_{g}^{Y}$ denote group actions on $X$ and $Y$, respectively.
Intuitively, equivariance ensures that transforming the input produces a correspondingly transformed output, enabling the network to generalize across orientations without relying on explicit data augmentation.
When $\rho_{g}^{Y}$ is the identity for all $g$, equivariance reduces to invariance.

\nbf{Lifting and Regular Representation}
Under an input rotation $r_k$, a group feature map $F\colon\mathbb{R}^{2}\!\times\!G\to\mathbb{R}^{C}$ transforms as follows:
\begin{equation}
  \bigl[\rho_{r_k}(F)\bigr](x,\,g)
  = F\bigl(r_k^{-1}x,\; r_k^{-1}g\bigr),
  \label{eq:regular}
\end{equation}
concretely, a cyclic permutation by $k$ along the group axis composed with a rotation by $r_k^{-1}$.
Following this representation, the planar input $\mathbf{x}_{0}\!\in\!\mathbb{R}^{c\times h\times w}$ is lifted to a group feature map as follows:
\begin{equation}
  \mathbf{y}(g) = \sum_{i=1}^{c}
  \mathbf{x}_{0,i} \ast \mathcal{R}_{g}(\mathbf{W}_{i}),
  \quad g \in \{0,1,\dots,N{-}1\},
  \label{eq:lift}
\end{equation}
where $\mathbf{W}\!\in\!\mathbb{R}^{c' \times c \times s \times s}$ is a learnable weight matrix with kernel size $s$ and $\mathcal{R}_{g}$ denotes spatial rotation by $2\pi g/N$.
For subsequent layers, group-to-group convolutions take the form:
\begin{equation}
  \mathbf{y}'(g) = \sum_{i=1}^{c} \sum_{g'=0}^{N-1}
  \mathbf{y}_{i}(g') \ast \mathcal{R}_{g}\!\bigl(\mathbf{W}_{i,\,g^{-1}g'}\bigr),
  \quad g \in \{0,1,\dots,N{-}1\},
  \label{eq:gconv}
\end{equation}
where the relative index $g^{-1}g'$ induces a cyclic shift along the group axis.
Successive group convolutions retain this transformation law.
Equation~\eqref{eq:gconv} costs $\mathcal{O}(c^{2}Ns^{2})$, which is prohibitive for real-time detection. We therefore decompose it into two stages that jointly preserve Eq.~\eqref{eq:regular}.

\textit{Stage\;1: Depthwise rotated spatial filtering.}
Each channel $i$ is convolved with a single spatial kernel $\mathbf{W}^{\mathrm{dw}}_{i}\!\in\!\mathbb{R}^{s\times s}$, rotated per group element:
\begin{equation}
  \tilde{F}_{i}(x,g)
  = F_{i}(x,g) \ast \mathcal{R}_{g}\!\bigl(\mathbf{W}^{\mathrm{dw}}_{i}\bigr),
  \label{eq:dw}
\end{equation}
preserving the cyclic-shift law of Eq.~\eqref{eq:regular} channel-wise at $\mathcal{O}(cNs^{2})$ cost.

\textit{Stage\;2: Pointwise group mixing.}
A $1\!\times\!1$ convolution mixes channels and group indices via a weight $\mathbf{W}^{\mathrm{pw}}\!\in\!\mathbb{R}^{c'\times c\times N}$ whose group axis is cyclically shifted:
\begin{equation}
  F'_{j}(x,g)
  = \sum_{i=1}^{c}\sum_{g'=0}^{N-1}
    W^{\mathrm{pw}}_{j,i,\,g^{-1}g'}\;\tilde{F}_{i}(x,g'),
  \label{eq:pw}
\end{equation}
at $\mathcal{O}(c^{2}N)$ per spatial location.
The cyclic indexing $g^{-1}g'$ guarantees that the composition of Eqs.~\eqref{eq:dw}--\eqref{eq:pw} satisfies Eq.~\eqref{eq:regular} end-to-end with cost $\mathcal{O}(cNs^{2}+c^{2}N)$.
\subsection{Spectral Implicit Warp}
\label{sec:speiw}

While the lifting layer endows features with rotational structure, the spectral dimension remains discretized into independent channels.
SpeIW addresses this by predicting a monotone warp index $\phi$ for each spectral channel and resampling via differentiable interpolation (Fig.~\ref{fig:network} (a)), enabling adaptive spectral modeling while preserving equivariance.

\nbf{Coordinate-Conditioned Factorization}
Let $F\in\mathbb{R}^{C\times|G|\times H\times W}$ denote a group feature map with $C$ spectral channels and $|G|$ orientation slots.
Each spectral channel is assigned a normalized spectral coordinate $\lambda_c\in[-1,1]$, lifted via Fourier positional encoding $\psi(\lambda)=\bigl[\lambda,\{\sin(2^{i}\pi\lambda),\cos(2^{i}\pi\lambda)\}_{i=0}^{L-1}\bigr]\in\mathbb{R}^{2L+1}$ to mitigate the frequency bias of coordinate-based networks~\cite{tancik2020fourierfeatures,rahaman2019spectralbias}.
We decompose the raw warp field into a spatial latent code $\mathbf{z}(x,g)\!\in\!\mathbb{R}^{R}$, derived from $F$ via a learned equivariant projection, and a spectral basis decoded by a coordinate-conditioned implicit function $f_{\theta}\colon\mathbb{R}^{2L+1}\!\to\!\mathbb{R}^{R}$:
\begin{equation}
  \delta_{c}(x,g)
  = \bigl\langle\,
      \mathbf{z}(x,g),\;\;
      f_{\theta}\!\bigl(\psi(\lambda_{c})\bigr)
    \,\bigr\rangle,
    \quad c=1,\dots,C.
  \label{eq:lowrank}
\end{equation}
Because $f_{\theta}$ is shared across all $C$ spectral channels and takes continuous spectral coordinates as input, it parameterizes a spectral basis that can be queried at arbitrary coordinates, with parameter cost that does not scale with $C$.

\nbf{Order-Preserving Mapping}
Directly using $\delta_c(x,g)$ as resampling indices may cause index crossing (i.e., $\phi_c(x,g)\ge \phi_{c+1}(x,g)$), resulting in non-monotonic spectral mappings and invalid resampling.
We therefore enforce monotonically ordered indices $\{\phi_1(x,g)<\phi_2(x,g)<\cdots<\phi_C(x,g)\}$ to preserve the channel order.
We obtain nonnegative increments via softplus, and enforce strict ordering by cumulative summation followed by normalization:
\begin{subequations}\label{eq:monotone}
\begin{gather}
  \Delta_{c}(x,g) \triangleq \sigma_{+}\!\bigl(\delta_{c}(x,g)\bigr)+\varepsilon_s, \label{eq:monotone_a}\\
  \phi_{c}(x,g) = (C{-}1)\cdot\frac{\sum_{j=2}^{c}\Delta_{j}(x,g)}{\sum_{j=2}^{C}\Delta_{j}(x,g)+\varepsilon_d}, \label{eq:monotone_b}
\end{gather}
\end{subequations}
where $\sigma_{+}$ denotes softplus and $\varepsilon_s,\varepsilon_d>0$ are numerical constants used by the implementation. For finite real-arithmetic values, each $\Delta_c$ is positive and the indices are strictly ordered; the denominator offset places the last index slightly below $C-1$. Floating-point rounding can collapse extremely small adjacent increments, so the implementation-level guarantee is an ordered, numerically non-decreasing map rather than bitwise strict inequality for every input. Componentwise finite replacement and clipping keep the sampled indices in $[0,C-1]$.

\nbf{Continuous Spectral Resampling}
Given the monotonically ordered indices $\{\phi_c(x,g)\}_{c=1}^C$, $\phi_c(x,g)\in[0,C{-}1]$, we obtain the warped feature via differentiable linear interpolation along the spectral dimension:
\begin{equation}
  \hat{F}_{c}(x,g)
  = \mathcal{I}\!\bigl[\,F(x,g,\cdot)\,;\;\phi_{c}(x,g)\,\bigr],
  \label{eq:spectral_field}
\end{equation}
where $\mathcal{I}[\,\cdot\,;\,t\,]$ denotes a differentiable linear interpolation operator evaluated at continuous index $t$.
Since $\mathbf{z}(x,g)$ is rotation-equivariant and $f_{\theta}$ is shared across orientations, the predicted indices $\phi_{c}(x,g)$ are equivariant as well.
Since the interpolation operates only along the spectral dimension and is applied independently for each $(x,g)$, it commutes with the group action.

\subsection{Rotation-Equivariant Consistency Weighting}
\label{sec:recog}

Multi-scale fusion is essential for scale robustness, but repeated cross-level aggregation can wash out discriminative spectral evidence. ReCoW addresses this issue (Fig.~\ref{fig:network} (b)) with complementary spectral and spatial routing followed by agreement-weighted residual modulation. The released operator refines the original group feature; it is not a scalar convex interpolation between two branches.

\nbf{Dual-Branch Prototype Routing}
Both branches operate on local windows of size $W\!\times\!W$.
A rotation-invariant descriptor $\bar{F}(x)=\frac{1}{|G|}\sum_g F(x,g)$ is first obtained by group averaging.
Within each window, we form $K\!=\!q^{2}$ prototype centers $\{\mathbf{c}_k\}_{k=1}^{K}$ by adaptive average pooling of $\bar{F}$ to a $q\!\times\!q$ spatial grid.
The grid preserves local spatial layout and provides region-specific centers, where $x$ indexes locations within the window.

The two branches adopt complementary routing rules. Spectral signatures often exhibit continuous mixing; thus, we employ cosine-based soft assignment:
\begin{subequations}\label{eq:soft_assign}
\begin{gather}
  s^{\mathrm{spec}}_{k}(x) = \tau\cdot\cos\!\bigl(\mathbf{c}_k,\,\bar{F}(x)\bigr), \label{eq:soft_assign_a}\\
  p_k(x) = \frac{\exp\!\bigl(s^{\mathrm{spec}}_{k}(x)\bigr)}{\sum_{k'}\exp\!\bigl(s^{\mathrm{spec}}_{k'}(x)\bigr)}, \label{eq:soft_assign_b}
\end{gather}
\end{subequations}
where $\tau$ is a learnable temperature.
The prototype reconstruction is
\begin{equation}
  \mathbf{r}_{\mathrm{spec}}(x) = \sum_{k=1}^{K} p_k(x)\,\mathbf{c}_k .
  \label{eq:spec_agg}
\end{equation}
It is scaled by the probability-weighted prototype similarity and its cosine agreement with $\bar F(x)$, each mapped to $[0,1]$. The result is broadcast across $g$ and passed through an equivariant pointwise post-projection to obtain $S(x,g)$.

Spatial grouping benefits from crisp region assignments; accordingly, we perform hard routing on a rotation-invariant spatial score map.
We compute a scalar score $s(x)=\frac{1}{C|G|}\sum_{c,g}F_c(x,g)$ and obtain scalar prototype centers $\{\mu_k\}_{k=1}^{K}$ by adaptive average pooling onto the same grid. Hard assignment uses a Gaussian affinity between normalized scores:
\begin{subequations}\label{eq:hard_assign}
\begin{gather}
  a_k(x) = \exp\!\bigl(-\gamma\,\|\hat\mu_k-\hat s(x)\|_2^2\bigr), \label{eq:hard_assign_a}\\
  k^{*}(x) = \argmax_{k} a_k(x), \label{eq:hard_assign_b}
\end{gather}
\end{subequations}
where $\gamma$ is a learnable inverse bandwidth and hats denote normalized scalar descriptors. Given $k^{*}(x)$, the spatial branch computes a per-group cluster mean:
\begin{equation}
  \mathbf{r}_{\mathrm{spat}}(x,g)
  = \frac{1}{\lvert \Omega_{k^*(x)} \rvert}\sum_{x'\in \Omega_{k^*(x)}} F(x',g),
  \label{eq:spat_agg}
\end{equation}
where $\Omega_{k}$ denotes the set of locations assigned to prototype $k$ in the window.
The selected feature is weighted by a sigmoid of the top-one/top-two affinity margin and transformed by an equivariant depthwise post-projection to obtain $T(x,g)$.

\nbf{Agreement-Weighted Residual Modulation}
Let $\bar S$ and $\bar T$ be the group averages of the routed features. Their invariant agreement is
\begin{equation}
  A(x)=\frac{1+\cos\!\bigl(\bar S(x),\bar T(x)\bigr)}{2}\in[0,1].
  \label{eq:agreement}
\end{equation}
The routed tensors jointly form an equivariant gate, and ReCoW returns
\begin{equation}
  F'(x,g)=F(x,g)+F(x,g)\odot\Bigl[A(x)\,\sigma\!\bigl(S(x,g)+T(x,g)\bigr)\Bigr],
  \label{eq:recow_out}
\end{equation}
where $\odot$ denotes componentwise multiplication. On a compatible square window lattice, group averaging makes $A$ invariant in the orientation index, while $S$, $T$, and the sigmoid gate carry the regular representation. Pointwise sums and products preserve the group action. The hard-routing statement assumes a unique winning prototype; exact affinity ties are resolved by implementation order.

\subsection{Oriented-Aware Head Network}
\label{sec:head}

Detection requires category-score fields that are invariant along the orientation axis while remaining spatially equivariant, whereas box and angle predictions must transform consistently with the input.
Let $F_{\!\mathrm{n}}\!\in\!\mathbb{R}^{C\times|G|\times H\times W}$ denote the group feature from the neck and $\rho_g$ the group action defined in Eq.~\eqref{eq:regular}.
We design the three branches of the head to satisfy (Fig.~\ref{fig:network}\,(c)):
\begin{subequations}\label{eq:head_equiv}
\begin{gather}
  f_{\mathrm{cls}}(\rho_g\,F_{\!\mathrm{n}})(x) \;=\; f_{\mathrm{cls}}(F_{\!\mathrm{n}})(g^{-1}x),
  \label{eq:cls_inv}\\[2pt]
  f_{\mathrm{box}}(\rho_g\,F_{\!\mathrm{n}}) \;=\; \rho_g\,f_{\mathrm{box}}(F_{\!\mathrm{n}}),
  \label{eq:box_equiv}\\[2pt]
  f_{\mathrm{ang}}(\rho_g\,F_{\!\mathrm{n}})(x) \;=\; \operatorname{wrap}_{\pi}\!\left(
    f_{\mathrm{ang}}(F_{\!\mathrm{n}})(g^{-1}x)
    + \tfrac{2\pi}{|G|}\,\operatorname{idx}(g)\right),
  \label{eq:ang_equiv}
\end{gather}
\end{subequations}
where $\operatorname{idx}(g)$ returns the integer index of $g$ in $G$, and $\operatorname{wrap}_{\pi}$ denotes the $\pi$-periodic OBB angle normalization.

All three branches are built from the depthwise-pointwise equivariant convolutions of Eqs.~\eqref{eq:dw}--\eqref{eq:pw} with shared parameters across orientations, keeping the head compact.
The classification branch achieves invariance via group mean pooling at the readout stage:
\begin{equation}\label{eq:group_pool}
  \mathbf{z}_{\mathrm{inv}} \;=\;
    \frac{1}{|G|}\sum_{g\in G}\mathbf{z}_g\,,
\end{equation}
where $\mathbf{z}_g$ is the feature response at orientation $g$; since group pooling removes only the orientation index while preserving the transformed spatial coordinate, Eq.~\eqref{eq:cls_inv} follows directly.
The box branch keeps group-indexed features and uses a cyclic-tied DFL readout whose weights depend on relative group offsets, while the angle branch predicts group-indexed residual candidates and decodes them by soft circular aggregation.
The box readout preserves the side-permutation law. The angle shift law holds whenever the weighted circular resultant is nonzero; exact or near-zero resultants are ambiguous and are treated as a numerical degeneracy rather than covered by an unconditional theorem.
\section{Experiments}

We evaluate FressDet on five multispectral object detection benchmarks: MODA \cite{han2025moda}, HOD3K \cite{hod3k}, and DroneVehicle \cite{dronevehicle} in the main paper, with LLVIP~\cite{jia2021llvip} and VEDAI~\cite{razakarivony2016vedai} reported in the supplementary material.

\subsection{Implementation Details}
\label{sec:impl}

The backbone employs four stages, each equipped with Fourier positional encoding of $L\!=\!8$ frequencies and an implicit MLP of depth 5 with ReLU activations. The neck applies ReCoW at each fusion level with local window size $W\!=\!4$ and $K\!=\!16$ prototypes arranged on a $4\!\times\!4$ spatial grid. The rotation group is constructed on $C_4$ in all experiments unless otherwise specified.

FressDet is trained from scratch without data augmentation, as the built-in rotation equivariance provides sufficient geometric robustness. The framework is built on Ultralytics YOLO~\cite{yolov8} with default settings, expanding the first convolution to match the number of MSI bands. All experiments are conducted in PyTorch on two NVIDIA RTX 3090 GPUs.

\begin{table*}[t]
\centering
\scriptsize
\setlength{\tabcolsep}{0.7pt}
\caption{Comparison with other methods on MODA. R-50: ResNet-50, ReR-50: ReResNet-50, ReR-N-50: ReResNet-N-50, CSP-D: CSPDarknet, CSP-N: CSPNeXt, RE-CSP-N: RE-CSPNeXt. \#P (M): trainable parameters. \#F (G): inference FLOPs.}
\label{tab:moda}
\begin{tabular}{l|c|cccccccc|ccc|cc}
\toprule
Methods & Backbone & Car & Bus & Van & Awi. & Tru. & Tri. & Bike & Ped. &
mAP$_{50}$ & mAP$_{75}$ & mAP & \textbf{\#F}  & \textbf{\#P} \\
\midrule
\multicolumn{15}{c}{\textbf{one-stage methods}} \\
R~Retina.\cite{retinanet}       & R-50    & 90.2 & 81.9 & 72.4 & 59.0 & 45.5 & 30.8 & 21.7 & 25.4 & 53.4 & 37.3 & 34.1 & 233.5 & 36.5 \\
GWD~\cite{yang2021rethinking}      & R-50    & 90.4 & 79.9 & 70.7 & 59.7 & 49.3 & 30.8 & 26.7 & 29.7 & 54.7 & 37.8 & 34.6 & 233.5 & 36.5 \\
RepPts-R~\cite{reppoints}          & R-50    & 89.9 & 70.2 & 67.0 & 66.1 & 50.2 & 41.5 & 33.3 & 40.5 & 57.3 & 34.9 & 33.0 & 213.6 & 36.9 \\
R3Det~\cite{yang2021r3det}         & R-50    & 90.4 & 88.7 & 74.0 & 63.8 & 57.8 & 45.1 & 29.4 & 32.2 & 60.2 & 36.7 & 34.8 & 362.2 & 42.0 \\
R3Det-KLD~\cite{yang2021learning}  & R-50    & 90.4 & 88.7 & 74.8 & 66.6 & 60.9 & 42.6 & 29.8 & 35.5 & 61.2 & 39.4 & 36.8 & 306.7 & 39.6 \\
YOLOv8n~\cite{yolov8}              & CSP-D   & 95.8 & 85.2 & 63.7 & 67.4 & 33.6 & 35.9 & 49.6 & 65.1 & 62.0 & 48.0 & 43.3 & \textbf{12.6}  & 3.2  \\
S$^2$ANet~\cite{han2021align}      & R-50    & 90.4 & 88.7 & 74.4 & 69.1 & 62.7 & 48.9 & 30.1 & 40.7 & 63.1 & 38.7 & 36.7 & 216.5 & 38.8 \\
S2ADet~\cite{hod3k}                & R-50    & 90.3 & 86.6 & 72.1 & 71.3 & 57.2 & 54.1 & 35.0 & 40.8 & 63.5 & 41.1 & 38.9 & 406.0 & 65.2 \\
FCOS-R~\cite{fcos}         & R-50    & 90.3 & 86.3 & 75.2 & 71.2 & 59.0 & 53.8 & 34.7 & 37.4 & 63.5 & 41.8 & 39.1 & 226.9 & 32.2 \\
YOLO26m~\cite{sapkota2025yolo26}   & CSP-D   & 95.5 & 85.0 & 67.3 & 74.4 & 35.7 & 33.8 & 56.9 & 66.6 & 64.4 & 51.2 & 46.5 & 167.2 & 31.8 \\
RTMDet~\cite{lyu2022rtmdet}        & CSP-N   & 90.0 & 88.4 & 75.3 & 71.1 & 62.8 & 52.5 & 42.9 & 41.0 & 65.5 & 41.5 & 40.1 & 258.5 & 52.3 \\
YOLOv8m~\cite{yolov8}              & CSP-D   & 96.2 & 87.1 & 64.7 & 73.3 & 40.4 & 36.3 & 57.0 & 69.4 & 65.6 & 53.5 & 47.9 & 112.0 & 26.4 \\
YOLOv12m~\cite{tian2025yolov12}    & CSP-D   & 96.3 & 85.3 & 68.8 & 74.3 & 40.3 & 36.3 & 57.4 & 69.5 & 66.0 & 52.4 & 46.8 & 115.0 & 24.7 \\
CFA~\cite{cfa}                     & R-50    & 90.4 & 89.0 & 76.7 & 69.7 & 64.4 & 55.1 & 43.1 & 41.5 & 66.2 & 43.2 & 40.6 & 213.6 & 36.9 \\
O-RepPts~\cite{oriented-reppoints} & R-50    & 90.5 & 89.2 & 77.7 & 71.2 & 66.2 & 53.1 & 43.0 & 41.1 & 66.5 & 44.1 & 40.9 & 213.6 & 36.9 \\
YOLO11m~\cite{yolo11_ultralytics}    & CSP-D   & 96.4 & 88.0 & 68.2 & 75.7 & 44.4 & 38.0 & 58.1 & 68.7 & 67.2 & 54.6 & 48.9 & 126.2 & 25.1 \\
DCFL~\cite{Xu_2023_CVPR}             & ReR-50  & 90.6 & 89.7 & 77.5 & 71.3 & 67.2 & 55.5 & 44.2 & 44.0 & 67.5 & 44.8 & 41.5 & 241.0 & 37.0 \\
MessDet~\cite{wu2025measuring} & RE-CSP-N& 90.0 & 89.1 & 75.4 & 71.4 & 70.2 & 54.9 & 48.4 & 45.5 & 68.1 & 42.4 & 40.1 & 616.0 & 15.1 \\
OSSDet~\cite{han2025moda}          & R-50    & 90.5 & 89.9 & \textbf{79.2} & 72.7 & \textbf{69.7} & \textbf{58.8} & 45.3 & 45.7 & 69.0 & 45.9 & 42.7 & 263.1 & 36.5 \\
\midrule
\multicolumn{15}{c}{\textbf{two-stage methods}} \\
GV~\cite{Gliding-vertex}           & R-50    & 90.3 & 89.1 & 73.8 & 69.5 & 66.0 & 46.7 & 41.4 & 22.6 & 62.4 & 35.7 & 34.7 & 230.8 & 41.4 \\
RoI-T~\cite{roi_trans}             & R-50    & 90.5 & 89.3 & 75.2 & 73.3 & 68.7 & 51.8 & 44.5 & 30.0 & 65.4 & 43.4 & 40.7 & 244.7 & 55.3 \\
O-RCNN~\cite{xie2021oriented}      & R-50    & 90.5 & 89.7 & 74.6 & 72.5 & 66.6 & 54.7 & 45.1 & 30.4 & 65.5 & 44.0 & 40.9 & 244.7 & 55.3 \\
LSKNet~\cite{Li_2024_IJCV}         & LSKNet-S& 90.4 & 88.5 & 74.5 & 72.5 & 66.0 & 52.5 & 41.0 & 28.2 & 64.2 & 42.8 & 39.4 & 221.9 & 32.6 \\
StripRCNN~\cite{yuan2025strip}     & R-50    & 90.5 & 89.0 & 76.1 & 73.6 & 67.1 & 56.5 & 45.2 & 30.7 & 66.1 & 44.0 & 41.0 & 231.8 & 45.2 \\
ReDet~\cite{redet}                 & ReR-N-50& 90.6 & 89.6 & 76.8 & 72.9 & 69.6 & 52.4 & 41.5 & 41.0 & 66.8 & 44.1 & 41.3 & 245.2 & 31.6 \\
\hline\noalign{\smallskip}
\multicolumn{15}{c}{\textbf{DETR based methods}} \\
AO$^2$-DETR~\cite{dai2022ao2}       & R-50    & 89.4 & 88.1 & 76.0 & 69.5 & 65.7 & 54.8 & 41.9 & 41.0 & 65.8 & 43.0 & 40.0 & 240.0 & 42.0 \\
ARS-DETR~\cite{zeng2024ars}        & Swin-T  & 90.5 & 89.2 & 77.2 & 70.6 & 64.8 & 53.6 & 43.2 & 39.7 & 66.1 & 43.4 & 40.5 & 254.0 & 44.2 \\
RHINO~\cite{Lee_2025_WACV}         & Swin-T  & 90.7 & 89.1 & 77.3 & 70.8 & 65.6 & 54.0 & 42.9 & 41.0 & 66.4 & 44.0 & 40.8 & 319.5 & 50.8 \\
O-DETR~\cite{zhao2024pointaxis}    & Swin-T  & 90.8 & 89.3 & 78.8 & 72.8 & 68.0 & 56.0 & 46.6 & 40.1 & 67.8 & 45.1 & 41.9 & 492.0 & 57.6 \\
\midrule
\rowcolor{LightPurple}
FressDet (Ours) & SpeIWNet & \textbf{97.7} & \textbf{90.3} & 75.4 & \textbf{77.8} & 59.5 & 49.4 & \textbf{62.0} & \textbf{72.5} & \textbf{73.1} & \textbf{60.2} & \textbf{54.3} & 31.7 & \textbf{2.3} \\
\bottomrule
\end{tabular}
\end{table*}
\subsection{Comparison with State-of-the-Art Methods}
\label{sec:sota}
\nbf{Results on MODA}
Tab.~\ref{tab:moda} reports results on MODA. FressDet outperforms OSSDet by 4.1\% mAP$_{50}$, 14.3\% mAP$_{75}$, and 11.6\% mAP, with only 6.3\% of OSSDet's parameters and 12.0\% of its FLOPs, and achieves large gains on small classes (bike +16.7\%, pedestrian +26.8\%). Fig.~\ref{fig:vis_moda} demonstrates fewer misses and false positives under clutter and low visibility. Fig.~\ref{fig:key_com} (a) further verifies component effectiveness: SpeIW yields more localized activations through continuous spectral resampling, while ReCoW uses routed cue agreement to modulate target responses and attenuate clutter. Fig.~\ref{fig:key_com} (b) shows that continuous spectral modeling enlarges inter-class margins (e.g., car vs. van) and forms tighter clusters for small classes (bike, pedestrian), improving discriminability.

\nbf{Results on HOD3K}
Tab.~\ref{tab:hod3k} shows FressDet surpasses the previous best OSSDet by 0.2\% in mAP while achieving the best mAP$_{50}$ with only 6.3\% of OSSDet's parameters and 26.8\% of its FLOPs.
Fig.~\ref{fig:hod3k_vis} shows qualitative results on HOD3K. Competing methods often miss occluded or background-blended objects, whereas FressDet enhances target--background separation, yielding predictions closer to the ground truth.

\begin{figure*}[!t]
\centering
\includegraphics[width=.94\linewidth]{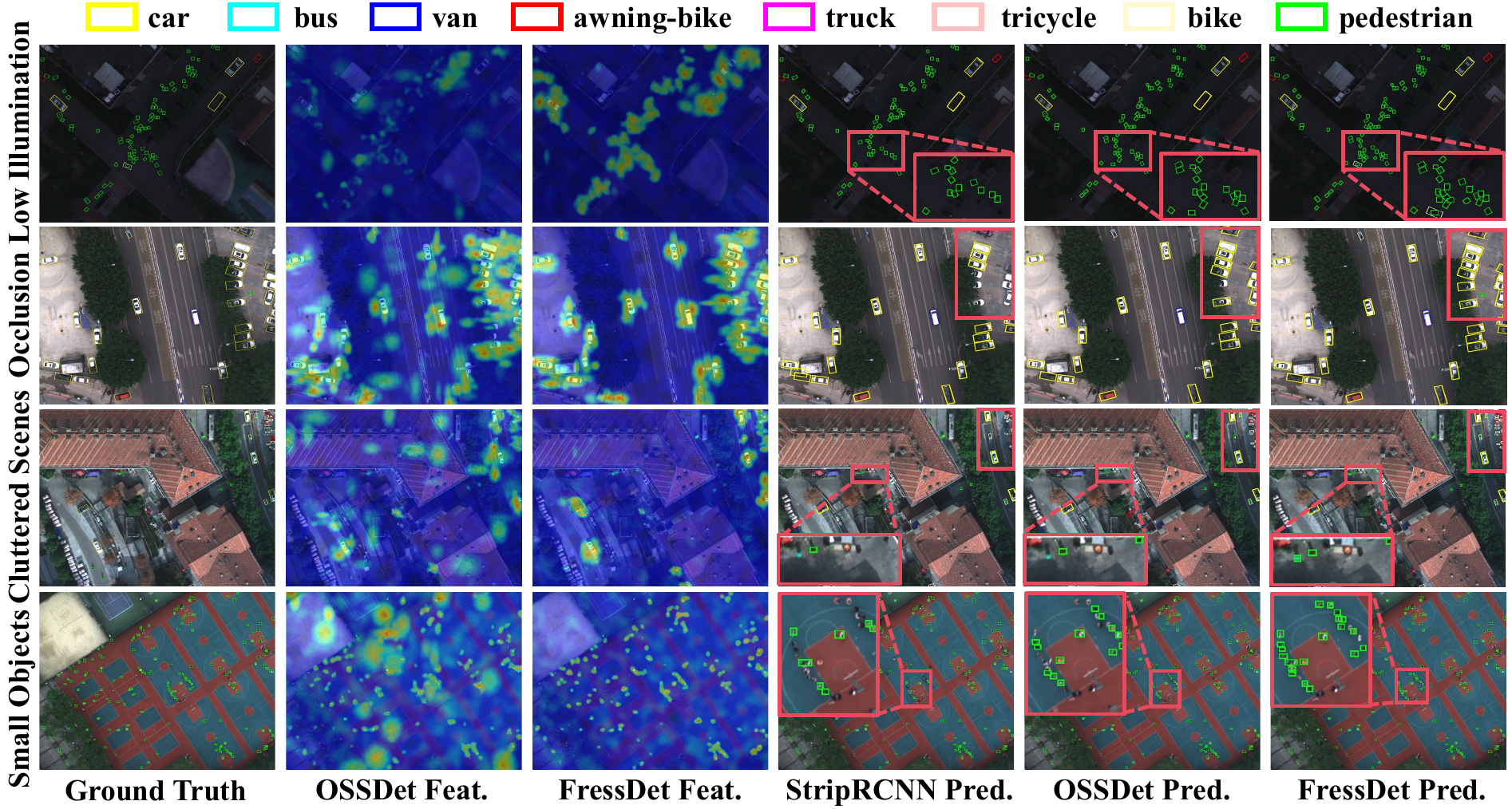}
\figcaption{Visualization comparison of detection results and feature maps obtained by different methods on MODA. Please zoom in to see details.}
\label{fig:vis_moda}

\scriptsize
\setlength{\tabcolsep}{5.2pt}
\tabcaption{Performance comparisons on HOD3K.}
\label{tab:hod3k}
\begin{tabular}{l|ccc|ccc|cc}
\toprule
Methods & People & Bike & Car & mAP$_{50}$ & mAP$_{75}$ & mAP & FLOPs & \textbf{\#P} \\
\midrule
Deformable DETR~\cite{zhu2021deformable} & 52.8 & 56.3 & 64.9 & 58.0 & 24.8 & 22.3 & 195.2G & 39.8M \\
FCOS~\cite{fcos}                         & 55.0 & 19.7 & 69.8 & 48.2 & 26.3 & 23.7 & 196.8G & 31.8M \\
YOLOF~\cite{chen2021you}              & 60.8 & 67.8 & 67.6 & 65.4 & 31.7 & 28.6 & 98.2G  & 42.1M \\
CO-DETR~\cite{zong2023detrs}            & 70.2 & 75.3 & 88.2 & 77.9 & 57.3 & 51.7 & 200.7G & 64.5M \\
RetinaNet~\cite{retinanet}          & 85.8 & 95.0 & 92.8 & 91.2 & 59.4 & 53.3 & 205.7G & 36.2M \\
YOLOv5~\cite{yolov5}             & 79.3 & 94.0 & 91.2 & 88.1 & 59.8 & 54.4 & 48.3G  & 20.9M \\
DINO~\cite{dino}               & 82.5 & 92.3 & 92.5 & 89.1 & 61.6 & 56.3 & 114.7G & 47.6M \\
Fovea~\cite{foveabox}              & 88.1 & 95.8 & 92.7 & 92.2 & 62.2 & 56.8 & 123.9G & 38.0M \\
Faster RCNN~\cite{fasterrcnn}        & 81.8 & 94.5 & 91.7 & 89.4 & 62.7 & 56.9 & 206.7G & 41.1M \\
ATSS~\cite{atss}               & 87.2 & 94.0 & 92.5 & 91.2 & 64.3 & 57.4 & 110.6G & 32.2M \\
TOOD~\cite{tood}               & 88.1 & 90.1 & 92.7 & 90.3 & 65.6 & 57.8 & 108.5G & 32.1M \\
MethaneMapper~\cite{kumar2023methanemapper}      & 80.2 & 95.1 & 90.9 & 88.7 & 63.8 & 57.9 & 267.6G & 80.3M \\
VFNet~\cite{vfnet}              & 88.5 & 96.2 & 92.2 & 92.3 & 66.5 & 59.0 & 104.5G & 32.8M \\
S2ADet~\cite{hod3k}             & 87.2 & \best{97.7} & 95.3 & 93.4 & 66.2 & 59.8 & 169.2G & 48.6M \\
OSSDet~\cite{han2025moda} & 88.3 & 96.5 & \best{95.4} & 93.4 & 68.8 & 60.9 & 131.2G & 36.6M \\
\midrule
\rowcolor{LightPurple}
FressDet (Ours) & \best{88.9} & 97.1 & 95.3 & \best{93.8} & \best{69.1} & \best{61.1} & \best{35.2G} & \best{2.3M} \\
\bottomrule
\end{tabular}
\end{figure*}

\nbf{Results on DroneVehicle}
As shown in Tab.~\ref{tab:dv}, our method surpasses the best~\cite{zhu2025wavemamba} by 1.1\% mAP and 0.5\% mAP$_{50}$ under identical settings.
FressDet achieves the best Car and Bus mAP$_{50}$, leading by 1.4\% and 1.3\% over the best competing methods in each category.
FressDet benefits from the representational continuity that naturally emerges along the spectral dimension.

\begin{figure*}[!t]
    \centering
    \includegraphics[width=\linewidth]{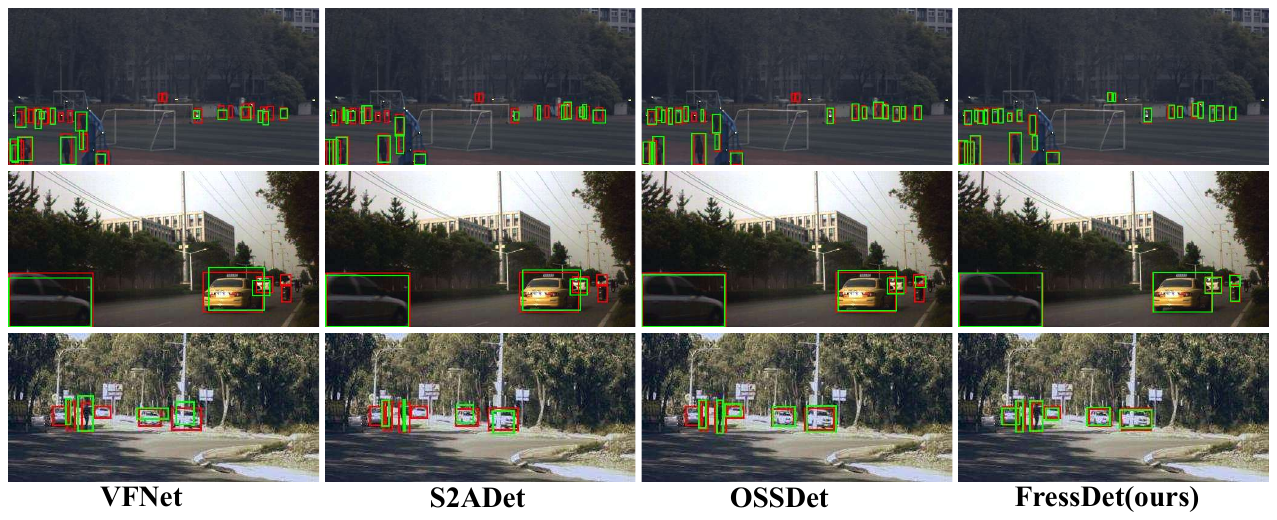}
    \caption{Detection results on HOD3K. Red boxes indicate the ground truth, while green boxes denote the predictions.}
    \label{fig:hod3k_vis}
\end{figure*}

\begin{table*}[!t]
\centering
\scriptsize
\setlength{\tabcolsep}{7.2pt}
\caption{Performance comparisons on DroneVehicle.}
\label{tab:dv}
\begin{tabular}{l|ccccc|cc}
\toprule
Methods & Car & Bus & Truck & Freight-car & Van & mAP$_{50}$ & mAP \\
\midrule
$C^2$Former-$S^2$ANet~\cite{yuan2024c2former} & 90.2 & 89.8 & 68.3 & 64.4 & 58.5 & 74.2 & 47.3 \\
YOLOv8l-RGB~\cite{yolov8}              & 92.5 & 91.7 & 68.8 & 47.8 & 50.2 & 70.2 & 48.6 \\
YOLOv8l-IR~\cite{yolov8}               & 93.4 & 91.9 & 69.3 & 53.7 & 51.1 & 71.9 & 49.1 \\
SLBAF-Net~\cite{cheng2023slbaf}            & 90.2 & 89.9 & 76.0 & 68.2 & 59.9 & 76.8 & 49.5 \\
CSOM-ODAF~\cite{chen2024weakly}     & 90.1 & 89.8 & 75.6 & 68.2 & 61.8 & 77.1 & 50.1 \\
Multimodal DINO~\cite{sun2024freqgate} & 89.5 & 88.8 & 75.4 & 54.3 & 54.3 & 72.5 & 50.3 \\
CMA~\cite{jiang2024m2fnet}            & 95.8 & 93.1 & 75.9 & 59.8 & 59.4 & 76.8 & 50.4 \\
CRSIOD~\cite{wang2024cross}               & 95.6 & 92.2 & 71.7 & 50.5 & 55.8 & 73.2 & 50.8 \\
YOLOFIV~\cite{wang2024yolofiv}              & 95.9 & 91.6 & 64.2 & 34.6 & 37.3 & 64.7 & 53.1 \\
GLFNet~\cite{kang2024glfnet}               & 90.3 & 88.0 & 72.7 & 53.6 & 52.6 & 71.4 & 54.8 \\
Dual-YOLO~\cite{bao2023dual}           & 95.9 & 91.6 & 69.7 & 55.9 & 46.6 & 71.9 & 55.2 \\
DMM~\cite{zhou2024dmm}  & 90.4 & 88.7 & 77.8 & 63.0 & \best{66.0} & 77.2 & 55.8 \\
CAFN-IA~\cite{xu2024cross}              & 89.1 & 90.8 & 62.0 & 57.3 & 47.1 & 69.3 & 56.1 \\
IV-YOLO~\cite{tian2024ivyolo} & 97.2 & 94.3 & 65.4 & 63.1 & 53.0 & 74.6 & 56.8 \\
WaveMamba~\cite{zhu2025wavemamba}           & 95.0 & 90.6 & \best{80.4} & \best{68.5} & 64.5 & 79.8 & 60.5 \\
\midrule
\rowcolor{LightPurple}
FressDet (Ours) & \best{98.6} & \best{95.6} & 79.0 & 65.9 & 62.4 & \best{80.3} & \best{61.6} \\
\bottomrule
\end{tabular}
\end{table*}
\subsection{Ablation Studies and Analysis}
\label{sec:ablation}
All ablations are conducted on MODA under identical settings unless noted.

\nbf{Effectiveness of FressDet's Key Components}
As shown in Tab.~\ref{tab:ablation_key}, introducing rotation-equivariant group convolutions alone yields a +4.4\% mAP gain over the baseline, confirming that geometric priors are as critical as spectral modeling for multispectral detection. Each subsequent module provides improvements, and the full model achieves the best result across all metrics.
Among the three proposed modules, SpeIW contributes the most, underscoring the value of continuous spectral modeling in multispectral object detection.

\nbf{Ablation on Spectral Operator}
We ablate the spectral operator by replacing SpeIW with rotation-equivariant DCN variants~\cite{dai2017deformable,Zhu_2019_CVPR,wang2023internimage,xiong2024efficient} (implementation details in the supplementary material). Tab.~\ref{tab:speiw_compare} shows that the plain C$_4$-CNN performs worst, underscoring the value of adaptive spectral resampling. SpeIW surpasses all DCN-based alternatives, outperforming the closest DCN variant by 1.1 mAP, indicating that order-preserving continuous resampling provides a stronger inductive bias than discrete deformable offsets.

\nbf{Transferability Across Detectors}
To evaluate SpeIWNet’s generalization capability, we replace the backbones of five detectors while keeping other components and training unchanged.
Fig.~\ref{fig:generalization} shows consistent performance gains across single- and two-stage architectures, along with reduced model size, demonstrating SpeIWNet’s effectiveness as a strong plug-and-play feature extractor across heterogeneous detectors.

\begin{figure*}[!t]
    \centering
    \includegraphics[width=\linewidth]{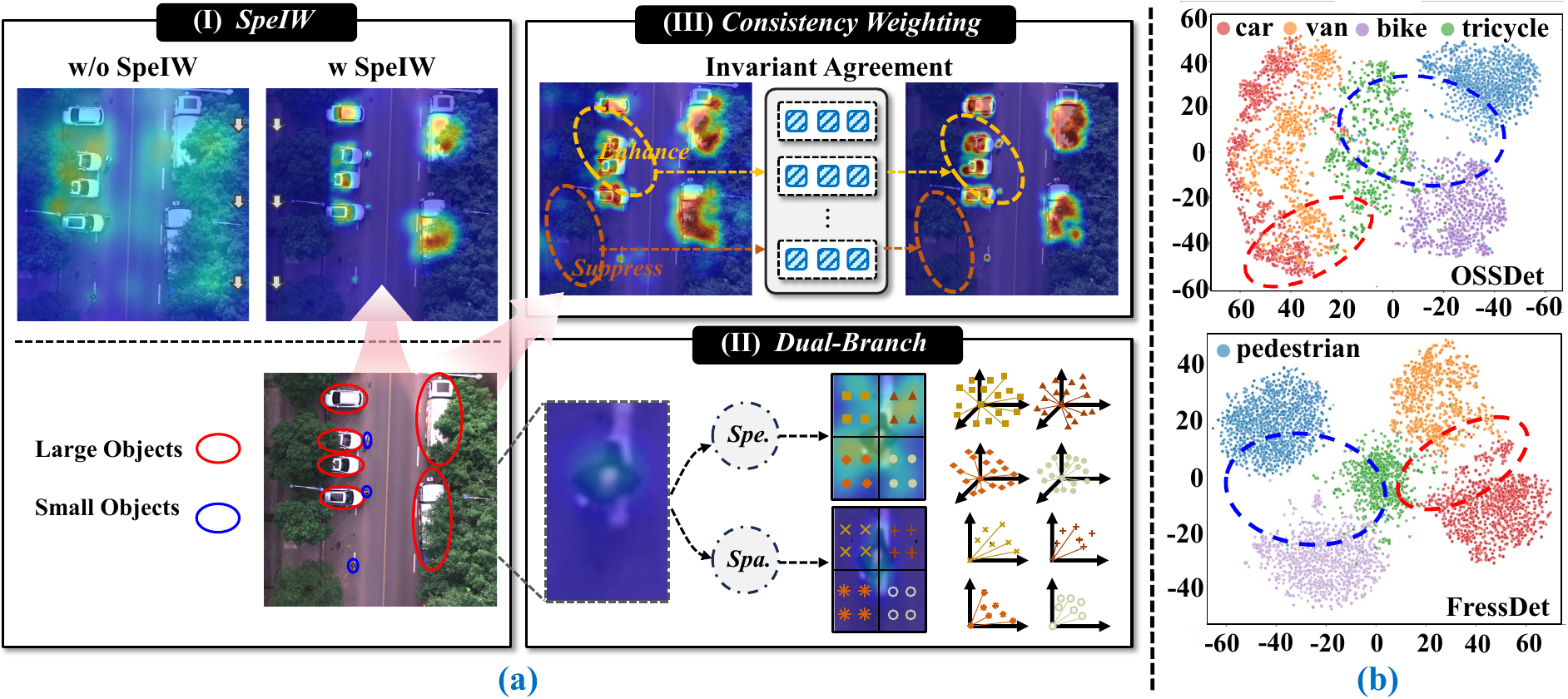}
    \caption{(a) Effectiveness visualization of key components. (b) 2D t-SNE~\cite{van2008visualizing} visualization of feature distributions on MODA.}
    \label{fig:key_com}
\end{figure*}

\begin{table}[!t]
    \begin{minipage}[t]{0.55\textwidth}
    \centering 
    \scriptsize
        \tabcaption{Ablation studies on the key components of FressDet. SIW: SpeIW, RCW: ReCoW, OH: Oriented-Aware Head.}
        \setlength{\tabcolsep}{2.3pt}{}
        \begin{tabular}{c | c c c | c c c} 
        \toprule[1.2pt]
        \textbf{Equiv.} & \textbf{SIW} & \textbf{RCW} & \textbf{OH} &
        \textbf{mAP$_{50}$} & \textbf{mAP$_{75}$} & \textbf{mAP} \\
        \midrule
        \xmark & \xmark & \xmark & \xmark & 62.0 & 48.0 & 43.3 \\
        \midrule
        \cmark & \xmark & \xmark & \xmark & 66.8 & 54.1 & 47.7 \\
        \cmark & \cmark & \xmark & \xmark & 69.9 & 55.8 & 50.3 \\
        \cmark & \xmark & \cmark & \xmark & 69.0 & 55.1 & 49.6 \\
        \cmark & \xmark & \xmark & \cmark & 68.6 & 55.5 & 49.3 \\
        \cmark & \cmark & \cmark & \xmark & 71.7 & 58.8 & 52.5 \\
        \cmark & \cmark & \xmark & \cmark & 71.6 & 58.4 & 52.2 \\
        \cmark & \cmark & \cmark & \cmark & \best{73.1} & \best{60.2} & \best{54.3} \\
        \bottomrule[1.2pt]
        \end{tabular}
        \label{tab:ablation_key}
    \end{minipage}
    \hfill
    \begin{minipage}[t]{0.41\textwidth}
    \renewcommand\arraystretch{1.28}
        \centering 
        \scriptsize
        \tabcaption{Comparison of spectral operators in the SpeIWNet backbone.}
        \setlength{\tabcolsep}{3.0pt}{}
        \begin{tabular}{l|ccc}
        \toprule[1.2pt]
        \textbf{Method} & \textbf{mAP$_{50}$} & \textbf{mAP$_{75}$} & \textbf{mAP} \\
        \midrule
        C$_4$-CNN              & 71.4 & 58.6 & 52.1 \\
        RE-DCN                 & 71.6 & 58.7 & 52.9 \\
        RE-DCNv2               & 72.0 & 59.3 & 53.2 \\
        RE-DCNv3               & 71.7 & 59.0 & 53.1 \\
        RE-DCNv4               & 71.9 & 59.2 & 53.0 \\
        \midrule
        \rowcolor{LightPurple}
        SpeIW  & \best{73.1} & \best{60.2} & \best{54.3} \\
        \bottomrule[1.2pt]
        \end{tabular}
        \label{tab:speiw_compare}
    \end{minipage}
\end{table}

\begin{figure}[!t]
    \begin{minipage}[t]{0.48\textwidth}
    \centering
    \includegraphics[width=\linewidth]{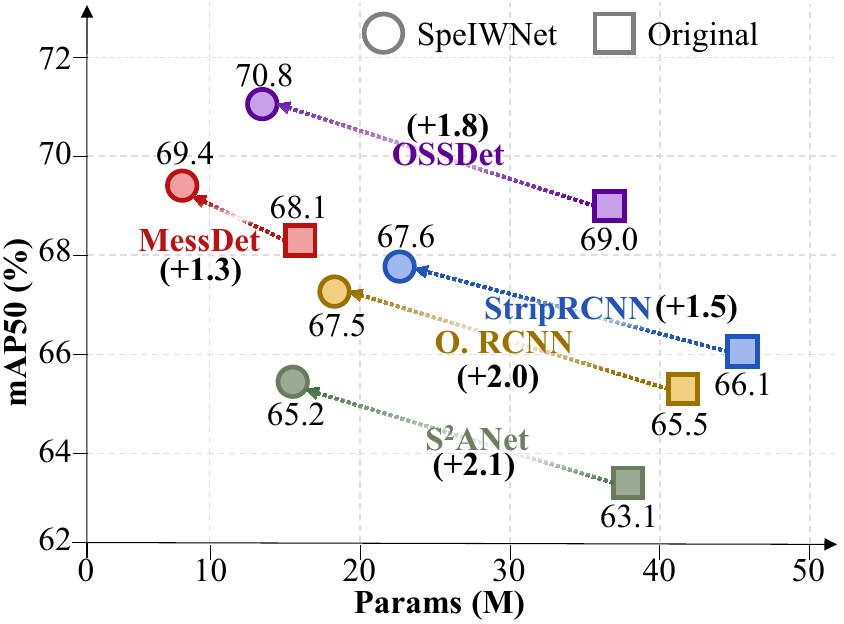}
    \figcaption{SpeIWNet backbone generalization across detectors on MODA.}
    \label{fig:generalization}
    \end{minipage}
    \hfill
    \begin{minipage}[t]{0.50\textwidth}
    \centering
    \includegraphics[width=\linewidth]{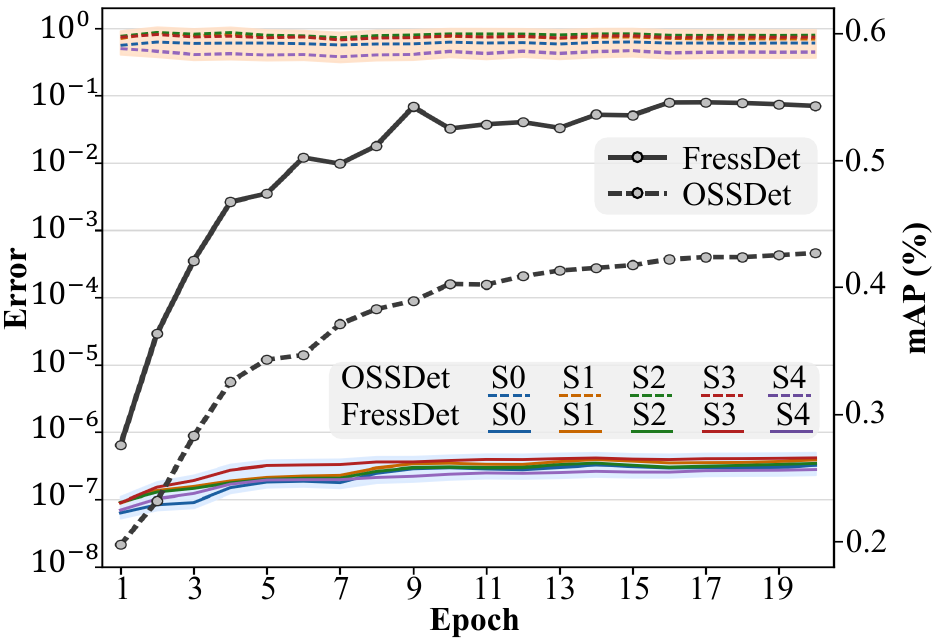}
    \figcaption{Equivariance error across backbone stages (S0: stem, S1–S4: stages 1–4).}
    \label{fig:equiv_error}
    \end{minipage}

    \begin{minipage}[t]{0.46\textwidth}
    \renewcommand\arraystretch{1.5}
        \centering 
        \scriptsize
        \tabcaption{Ablation on group order in the FressDet backbone.}
        \setlength{\tabcolsep}{0.9pt}{}
        \makebox[\linewidth][l]{%
        \begin{tabular}{c|ccc|cc}
        \toprule[1.2pt]
        \textbf{Group} & \textbf{mAP$_{50}$} & \textbf{mAP$_{75}$} & \textbf{mAP} & 
        \textbf{\#F}(G) & \textbf{\#P}(M) \\
        \midrule
        $C_2$ & 71.2 & 58.3 & 51.8 & 15.4 & 1.1 \\
        \rowcolor{LightPurple}
        $C_4$ & \best{73.1} & \best{60.2} & \best{54.3} & 31.7 & 2.3 \\
        $C_8$ & 73.2 & 60.3 & 54.3 & 65.1 & 4.8 \\

        \bottomrule[1.2pt]
        \end{tabular}%
        }
        \label{tab:group_order}
    \end{minipage}
    \hfill
    \begin{minipage}[t]{0.49\textwidth}
    \centering \scriptsize
        \tabcaption{Fourier frequency $L$ ablation in the SpeIW module.}
        \setlength{\tabcolsep}{11pt}{}
        \makebox[\linewidth][l]{%
        \begin{tabular}{c|ccc} 
        \toprule[1.2pt]
        \textbf{$L$} & \textbf{mAP$_{50}$} & \textbf{mAP$_{75}$} & \textbf{mAP} \\
        \midrule
        0  & 71.5 & 58.7 & 52.6 \\
        2  & 72.3 & 59.4 & 53.4 \\
        4  & 72.8 & 59.9 & 54.0 \\
        \rowcolor{LightPurple}
        8  & \best{73.1} & \best{60.2} & \best{54.3} \\
        16 & 72.9 & 60.0 & 54.1 \\
        \bottomrule[1.2pt]
        \end{tabular}%
        }
        \label{tab:fourier_freq}
    \end{minipage}

\end{figure}

\nbf{Rotation Group Order}
We compare $C_2$, $C_4$, and $C_8$ groups in Tab.~\ref{tab:group_order}.
Higher group order improves accuracy at the cost of increased parameters and FLOPs.
$C_4$ achieves the best balance between accuracy and efficiency: $C_8$ incurs more than $2\times$ FLOPs for only marginal improvement, suggesting that $90^\circ$ discretization already captures the dominant orientation variations.

\nbf{Fourier Frequency in SpeIW}
We ablate the Fourier positional encoding frequency $L$ in SpeIW (Tab.~\ref{tab:fourier_freq}).
Higher $L$ enables better high-frequency modeling along the spectral dimension, but excessive frequencies may lead to overfitting.
$L\!=\!8$ achieves the best performance.

\nbf{Spectral Sensitivity}
Tab.~\ref{tab:spectral_sensitivity} compares RGB/MSI inputs across MODA detectors.
MSI improves every method, with FressDet showing the largest gain (+2.7 mAP$_{50}$ and +3.7 mAP); this supports continuous spectral modeling as a better way to exploit the available bands more fully.

\nbf{Spectral-Spatial Routing}
Tab.~\ref{tab:routing_mechanism} shows that routing should match the branch property.
Soft spectral routing preserves diversity, while hard spatial routing keeps sharper boundaries; the mismatched alternatives lose one of these cues.

\nbf{Prototype Number in ReCoW}
We ablate the spatial grid size used to generate prototypes in ReCoW (Tab.~\ref{tab:recog_prototypes}), which shows that smaller grids lack sufficient spatial resolution to capture diverse local patterns, while larger grids introduce redundancy and overfitting, and the $4\!\times\!4$ grid ($K\!=\!16$) achieves the best balance.
\begin{table}[!t]
    \centering
    \begin{minipage}[t]{0.71\textwidth}
    \centering \scriptsize
        \captionsetup{justification=raggedright,singlelinecheck=false}
        \tabcaption{Spectral sensitivity on MODA. Entries report RGB/MSI results with gains in parentheses.}
        \setlength{\tabcolsep}{0.2pt}{}
        \begin{tabular}{l|cccc}
        \toprule[1.2pt]
        \textbf{Method} & \textbf{mAP$_{50}$} & \textbf{mAP} & \textbf{\#F}(G) & \textbf{\#P}(M) \\
        \midrule
        S-RCNN      & 65.2/66.1\gain{(+0.9)} & 39.6/41.0\gain{(+1.4)} & 227.4/231.8 & 45.13/45.15 \\
        O-RepPoints & 66.2/66.5\gain{(+0.3)} & 39.4/40.9\gain{(+1.5)} & 209.2/213.6 & 36.83/36.85 \\
        OSSDet      & 67.1/69.0\gain{(+1.9)} & 39.7/42.7\gain{(+3.0)} & 258.7/263.1 & 36.46/36.48 \\
        \rowcolor{LightPurple}
        FressDet & \best{70.4/73.1}\gain{(+2.7)} & \best{50.6/54.3}\gain{(+3.7)} & \best{31.0/31.7} & \best{2.29/2.30} \\
        \bottomrule[1.2pt]
        \end{tabular}
        \label{tab:spectral_sensitivity}
    \end{minipage}
    \hfill
    \begin{minipage}[t]{0.28\textwidth}
    \centering \scriptsize
        \captionsetup{justification=raggedright,singlelinecheck=false}
        \tabcaption{Routing ablation in ReCoW.}
        \setlength{\tabcolsep}{0.8pt}{}
        \begin{tabular}{cc|cc}
        \toprule[1.2pt]
        \textbf{Spe.} & \textbf{Spa.} & \textbf{mAP$_{50}$} & \textbf{mAP} \\
        \midrule
        Hard & Hard & 70.4 & 51.2 \\
        Soft & Soft & 71.6 & 52.4 \\
        Hard & Soft & 69.8 & 50.6 \\
        \rowcolor{LightPurple}
        Soft & Hard & \best{73.1} & \best{54.3} \\
        \bottomrule[1.2pt]
        \end{tabular}
        \label{tab:routing_mechanism}
    \end{minipage}

    \begin{minipage}[t]{0.318\textwidth}
    \centering \scriptsize
        \captionsetup{justification=raggedright,singlelinecheck=false}
        \tabcaption{Prototype grid ablation in ReCoW.}
        \setlength{\tabcolsep}{0.2pt}{}
        \begin{tabular}{cc|ccc}
        \toprule[1.2pt]
        \textbf{$K$} & \textbf{Grid} & \textbf{mAP$_{50}$} & \textbf{mAP$_{75}$} & \textbf{mAP} \\
        \midrule
        4  & $2\!\times\!2$ & 71.8 & 59.0 & 52.9 \\
        9  & $3\!\times\!3$ & 72.5 & 59.6 & 53.6 \\
        \rowcolor{LightPurple}
        16 & $4\!\times\!4$ & \best{73.1} & \best{60.2} & \best{54.3} \\
        25 & $5\!\times\!5$ & 72.8 & 59.9 & 54.0 \\
        \bottomrule[1.2pt]
        \end{tabular}
        \label{tab:recog_prototypes}
    \end{minipage}
    \hfill
    \begin{minipage}[t]{0.338\textwidth}
    \renewcommand\arraystretch{1.0}
    \centering \scriptsize
        \captionsetup{justification=raggedright,singlelinecheck=false}
        \tabcaption{Fusion strategy ablation in ReCoW.}
        \setlength{\tabcolsep}{0.2pt}{}
        \begin{tabular}{l|ccc}
        \toprule[1.2pt]
        \textbf{Fusion} & \textbf{mAP$_{50}$} & \textbf{mAP$_{75}$} & \textbf{mAP} \\
        \midrule
        Addition        & 71.9 & 59.0 & 52.8 \\
        Concat.   & 72.4 & 59.5 & 53.4 \\
        Softmax    & 72.7 & 59.8 & 53.8 \\
        \rowcolor{LightPurple}
        Agree. gate & \best{73.1} & \best{60.2} & \best{54.3} \\
        \bottomrule[1.2pt]
        \end{tabular}
        \label{tab:recog_fusion}
    \end{minipage}
    \hfill
    \begin{minipage}[t]{0.324\textwidth}
    \renewcommand\arraystretch{1.25}
    \centering \scriptsize
        \captionsetup{justification=raggedright,singlelinecheck=false}
        \tabcaption{Dual-branch ablation in ReCoW.}
        \setlength{\tabcolsep}{0pt}{}
        \begin{tabular}{cc|ccc}
        \toprule[1.2pt]
        \textbf{Spe.} & \textbf{Spa.} & \textbf{mAP$_{50}$} & \textbf{mAP$_{75}$} & \textbf{mAP} \\
        \midrule
        \cmark & \xmark & 72.3 & 59.5 & 53.6 \\
        \xmark & \cmark & 71.8 & 59.1 & 53.2 \\
        \cmark & \cmark & \best{73.1} & \best{60.2} & \best{54.3} \\
        \bottomrule[1.2pt]
        \end{tabular}
        \label{tab:recog_branch}
    \end{minipage}
\end{table}

\vspace{-1.0\baselineskip}
\enlargethispage{2\baselineskip}
\nbf{Branch Fusion Strategy in ReCoW}
We ablate the fusion strategy for combining the spectral and spatial branches in ReCoW (Tab.~\ref{tab:recog_fusion}).
Our learned gate outperforms all alternatives, confirming the benefit of adaptive branch selection over fixed or implicit fusion.
Against addition, concatenation, and softmax fusion, the gate raises mAP by 1.5, 0.9, and 0.5 points, respectively.

\nbf{Dual-Branch Design in ReCoW}
We ablate each ReCoW branch (Tab.~\ref{tab:recog_branch}).
The spectral and spatial branches achieve 72.3\% and 71.8\% mAP$_{50}$, respectively.
Their combination reaches 73.1\%, confirming that the two branches capture complementary information and yield consistent improvements when combined.

\nbf{Rotation Equivariance Error}
To quantify how well rotation equivariance is preserved, we measure the equivariance error at each backbone stage, defined as the RMS difference between rotating the input before and the output after the network~\cite{wu2025measuring}: Fig.~\ref{fig:equiv_error} shows FressDet maintains low equivariance error across all stages, while OSSDet exhibits consistently higher errors.

\section{Conclusion}

We present FressDet, the first fully rotation-equivariant spectral--spatial learning framework for multispectral object detection.
FressDet addresses discrete spectral processing, spectral--spatial reliability shifts, and arbitrary orientations through three components: SpeIW for continuous, monotonic spectral resampling, ReCoW for agreement-weighted residual refinement across pyramid levels, and an oriented-aware head for stable orientation prediction without parameter replication.
Extensive experiments on five public benchmarks demonstrate that FressDet achieves state-of-the-art accuracy with the fewest parameters among compared multispectral detectors and exhibits the best robustness even under input rotational perturbations.

\section*{Acknowledgements}
This work was financially supported by the National Natural Science Foundation of China (No. 62101032), the Chongqing Excellent Young Scientists Fund (No. CSTB2025NSCQ-JQX0017), and the High-Quality Development Special Project of the Ministry of Industry and Information Technology (TC240HAJ9-35).

\clearpage

\putbib[main]
\end{bibunit}

\clearpage
\hypersetup{pageanchor=false}
\begin{NoHyper}
\setcounter{page}{1}
\setcounter{section}{0}
\setcounter{subsection}{0}
\setcounter{figure}{0}
\setcounter{table}{0}
\setcounter{equation}{0}
\setcounter{footnote}{0}
\begin{bibunit}[splncs04]
\nocite{cascadercnn,cssa,chen2024weakly,cohen2016group,dai2017deformable,fu2024lraf,fu2024rotation,ImageBind,han2025moda,calnet,hod3k,jia2021llvip,yolov8,fred,lin2014coco,nie2024crossyolo,razakarivony2016vedai,shen2024icafusion,dronevehicle,tian2025dcccnet,wang2023internimage,e2cnn,DCMNet,xiong2024efficient,xu2024ssefft,vfnet,zhang2024dsmavd,zhang2024esmyolo,RSDet,zhu2025wavemamba,zhu2025mda,Zhu_2019_CVPR}

\title{Supplementary Material:\\Fully Rotation-Equivariant Spectral-Spatial Learning for Multispectral Object Detection}
\titlerunning{Supplementary Material}
\author{Peng Zhang\orcidlink{0009-0008-9585-9622} \and Tingfa Xu$^\dagger$\orcidlink{0000-0001-5452-2662} \and Shuaihao Han\orcidlink{0009-0008-2594-1516} \and Jianan Li$^\dagger$\orcidlink{0000-0002-6936-9485}}
\authorrunning{P. Zhang et al.}
\institute{Beijing Institute of Technology, Beijing, China}
\maketitle
\begingroup
\renewcommand{\thefootnote}{\dag}
\footnotetext{Correspondence to: Tingfa Xu and Jianan Li.}
\endgroup

\section{Equivariance Preliminaries}
\subsection{Algebraic and Numerical Equivariance}
Let $\Phi:V_{\mathrm{in}}\to V_{\mathrm{out}}$ map between feature spaces carrying representations $\rho^{\mathrm{in}}$ and $\rho^{\mathrm{out}}$ of a group $G$. Algebraic $G$-equivariance means
\begin{equation}
  \Phi\!\left(\rho^{\mathrm{in}}_g f\right)
  =\rho^{\mathrm{out}}_g\Phi(f),
  \qquad g\in G.
  \label{eq:supp_equiv}
\end{equation}
For a numerical implementation, we report the normalized residual
\begin{equation}
  \epsilon_{\mathrm{rel}}(\Phi,g,f)=
  \frac{\left\|\Phi(\rho^{\mathrm{in}}_g f)-\rho^{\mathrm{out}}_g\Phi(f)\right\|}
  {\max\!\left(\|\Phi(\rho^{\mathrm{in}}_g f)\|,\|\rho^{\mathrm{out}}_g\Phi(f)\|,\eta\right)},
  \label{eq:supp_relerr}
\end{equation}
where $\eta>0$ prevents division by zero. Unlike an unrestricted supremum over all $f$, Eq.~\eqref{eq:supp_relerr} is well defined on the evaluated finite feature set. In this paper, ``fully rotation-equivariant'' means that the learned dense predictor contains no deliberately symmetry-breaking stage under the modeled $C_4$ action. It does not mean bitwise equality for every floating-point input. Kernel rotation, interpolation, reductions, and finite precision can leave round-off-level residuals even when the corresponding real-arithmetic operator is equivariant.

For a composition $\Phi_L\circ\cdots\circ\Phi_1$ on a bounded feature domain, numerical deviations admit the usual Lipschitz propagation bound. The useful conclusion is therefore conditional: algebraically equivariant components compose equivariantly, while measured floating-point residuals should be reported rather than asserted to be identically zero.

\subsection{Lattice and Boundary Conditions}
The spatial action must map the sampled output lattice to itself. For the cardinal rotations in $C_4$, this is satisfied by the square feature grids used by FressDet and by stride/kernel choices whose sampling origins remain aligned after rotation. Local ReCoW windows additionally require the padded grid and its window partition to be permuted by the same $C_4$ action.

Constant zero extension does not inherently break $C_4$ equivariance. Let $P_0$ symmetrically extend a square feature map by the same constant on every side. Because a cardinal rotation maps the padded square to itself and leaves the constant exterior unchanged,
\begin{equation}
  P_0(\rho_g f)=\rho_g P_0(f), \qquad g\in C_4.
  \label{eq:supp_padding}
\end{equation}
FressDet uses constant zero padding in its convolutions and, when needed, in ReCoW window padding. Equation~\eqref{eq:supp_padding}, rather than a reflect-padding assumption, is the relevant boundary condition. Asymmetric padding or an incompatible rectangular lattice would require a separate analysis.

\subsection{Pointwise, Normalization, and Pooling Operations}
Pointwise nonlinearities commute with the regular action because that action only reindexes spatial and group coordinates. For a group feature $F(x,h)$,
\begin{equation}
  \sigma(\rho_g F)(x,h)
  =\sigma(F(g^{-1}x,g^{-1}h))
  =(\rho_g\sigma(F))(x,h).
\end{equation}
The same observation applies to componentwise clipping. Group-shared normalization preserves the action when its affine parameters and reduction axes are shared consistently across group slots.

Group averaging removes the orientation index while retaining the transformed spatial coordinate:
\begin{equation}
  \overline{F}(x)=\frac{1}{|G|}\sum_{h\in G}F(x,h),
  \qquad
  \overline{\rho_gF}(x)=\overline{F}(g^{-1}x).
  \label{eq:supp_groupmean}
\end{equation}
Average or max pooling applied independently to each group slot is equivariant whenever its sampled lattice satisfies the alignment condition above. These facts support PatchMerging, ReSPPF, and the invariant descriptors used by ReCoW.

\section{Model Architecture}
FressDet follows a four-stage backbone, a bidirectional feature pyramid, and an oriented-aware head. Every learned feature map before the final readout is represented as $(B,C,|G|,H,W)$ with $G=C_4$.

\paragraph{Backbone.}
The input is lifted to a regular group feature by a rotation-tied lifting convolution. Each stage contains group layer normalization, Spectral Implicit Warp (SpeIW), a residual connection, and an equivariant MLP. SpeIW forms a spatial latent code with an equivariant pointwise projection. Its coordinate encoder contains the spectral coordinate together with sinusoidal and cosine features, and a shared MLP decodes coordinate-dependent bases. Positive spectral steps are cumulatively integrated to construct an ordered sampling map. PatchMerging performs spatial average-pooling downsampling in two equivariant pointwise branches, and ReSPPF aggregates context with per-group max pooling followed by an equivariant pointwise projection.

\paragraph{Neck.}
The neck uses equivariant transposed convolution for upsampling, equivariant depthwise-pointwise convolution for downsampling, channel concatenation, and ReCoW refinement. ReCoW contains two complementary local routing branches. The spectral branch computes group-averaged vector descriptors, cosine soft assignments to pooled prototypes, and confidence-weighted prototype reconstructions. The spatial branch routes a group-averaged scalar score by Gaussian affinity, forms per-group cluster means, and weights them by the winner margin. Equivariant post-projections transform both routed features. Their agreement controls a residual modulation of the original group feature; the released operator is not a scalar convex combination of the two routed branches.

\paragraph{Oriented-aware head.}
Each prediction scale uses two equivariant convolution blocks per branch. Classification averages over group slots before a shared $1\times1$ readout. Box regression keeps the group axis and uses a cyclic-tied DFL projection with a side-shared bias. The angle branch predicts one residual candidate per group slot and applies a soft circular decoder.

\section{Implementation-Aligned Equivariance Analysis}
We now analyze the operators implemented in the public release. The statements below concern the dense learned predictor and the $C_4$ action, under compatible square lattices and finite arithmetic inputs. Numerical equality is understood up to floating-point residuals.

\subsection{Spectral Implicit Warp}
Let $F\in\R^{C\times|G|\times H\times W}$. An equivariant projection and a shared linear map produce $a(x,g)\in\R^R$. For spectral coordinate $\lambda_c$, the shared coordinate network produces $b_c\in\R^R$, and
\begin{equation}
  r_c(x,g)=\langle a(x,g),b_c\rangle,
  \qquad
  d_c(x,g)=\operatorname{softplus}(r_c(x,g))+\varepsilon_s.
  \label{eq:supp_speiw_raw}
\end{equation}
The released implementation computes
\begin{align}
  u_c &= \sum_{j=1}^{c}d_j-d_1, \\
  \phi_c &= (C-1)\frac{u_c}{u_C+\varepsilon_d},
  \label{eq:supp_speiw_index}
\end{align}
then clamps $\phi_c$ to $[0,C-1]$. For finite real-arithmetic values, $d_c>0$, $u_1=0$, and $u_{c+1}>u_c$, so the sampling locations are ordered. Because the denominator contains $\varepsilon_d>0$, the implemented endpoint is slightly below $C-1$ rather than exactly equal to it. In floating point, rounding may collapse extremely small adjacent increments; the correct implementation-level claim is therefore ordered and numerically non-decreasing, not an unconditional bitwise strict-order theorem.

More explicitly, for $c'>c$ the index difference has the sign of a sum of positive steps:
\begin{equation}
  \phi_{c'}-\phi_c
  =(C-1)\frac{\sum_{j=c+1}^{c'}d_j}{u_C+\varepsilon_d}>0
  \label{eq:supp_speiw_order}
\end{equation}
in exact arithmetic. Thus the denominator offset changes the endpoint but not the channel order. Clipping only enforces the legal interpolation interval.

All operations in Eqs.~\eqref{eq:supp_speiw_raw}--\eqref{eq:supp_speiw_index} are shared across group slots and act independently at each $(x,g)$. If $\tilde{x}=h^{-1}x$ and $\tilde{g}=h^{-1}g$, then
\begin{equation}
  \phi_c(\rho_hF)(x,g)=\phi_c(F)(\tilde{x},\tilde{g}).
\end{equation}
Linear interpolation gathers only along the spectral channel axis, so it commutes with the spatial-group reindexing. Clipping also commutes with permutations. The final equivariant pointwise projection therefore yields
\begin{equation}
  \operatorname{SpeIW}(\rho_hF)=\rho_h\operatorname{SpeIW}(F)
\end{equation}
in the algebraic model, with round-off-level residuals in floating-point execution.

To see the interpolation step directly, let $\mathcal{I}$ denote the two-neighbor gather and linear blend. With $\tilde{x}=h^{-1}x$ and $\tilde{g}=h^{-1}g$,
\begin{align}
 \mathcal{I}\!\left[(\rho_hF)(x,g,\cdot);\phi_c(\rho_hF)(x,g)\right]
 &=\mathcal{I}\!\left[F(\tilde{x},\tilde{g},\cdot);\phi_c(F)(\tilde{x},\tilde{g})\right].
\end{align}
The sampling indices and both interpolation weights are consequently reindexed together.

\subsection{Rotation-Equivariant Consistency Weighting}
ReCoW operates on square local windows. Let $m(x)=|G|^{-1}\sum_gF(x,g)$ denote the group-averaged vector descriptor and let $\{c_k\}_{k=1}^{K}$ be its adaptively pooled prototypes within a window.

\paragraph{Spectral routing.}
For each location, the spectral branch computes
\begin{align}
  p_k(x)&=\operatorname{softmax}_{k}\!\left(\tau\cos(c_k,m(x))\right),\\
  r_{\mathrm{spec}}(x)&=\sum_k p_k(x)c_k.
\end{align}
The reconstruction is multiplied by two confidence factors: the probability-weighted prototype similarity and the cosine agreement between $r_{\mathrm{spec}}(x)$ and $m(x)$, each mapped to $[0,1]$. It is then broadcast over the group axis and passed through an equivariant pointwise post-projection, producing $S(x,g)$.
Writing these factors explicitly,
\begin{align}
 c_{\mathrm{assign}}(x)&=\frac{1+\sum_kp_k(x)\cos(c_k,m(x))}{2},\\
 c_{\mathrm{agree}}(x)&=\frac{1+\cos(r_{\mathrm{spec}}(x),m(x))}{2}.
 \label{eq:supp_spec_conf}
\end{align}
The routed vector before post-projection is $c_{\mathrm{assign}}c_{\mathrm{agree}}r_{\mathrm{spec}}$, shared over the orientation slots exactly as in the release.

\paragraph{Spatial routing.}
The spatial score is the mean over channels and group slots. Pooled scalar centers are compared with normalized pixel scores by a Gaussian affinity. The winning center is selected by \texttt{argmax}; original group features assigned to that center are averaged separately for every group slot. The selected cluster feature is weighted by a sigmoid of the top-one/top-two affinity margin and passed through an equivariant depthwise post-projection, producing $T(x,g)$.
For affinity values $a_k(x)$, let $a_{(1)}(x)\geq a_{(2)}(x)$ be the two largest values. The released confidence is $c_{\mathrm{spat}}(x)=\sigma(\kappa[a_{(1)}(x)-a_{(2)}(x)])$. If $\Omega_k$ is the set assigned to center $k$, the routed group feature is
\begin{equation}
 r_{\mathrm{spat}}(x,g)=c_{\mathrm{spat}}(x)
 \frac{\sum_{x'\in\Omega_{k^*(x)}}F(x',g)}{\max(1,|\Omega_{k^*(x)}|)}.
 \label{eq:supp_spat_route}
\end{equation}

Under the $C_4$ action, group averaging gives spatially transformed invariant descriptors. On a compatible square window lattice, rotation permutes windows, within-window sites, and prototype locations. Soft assignments, pooled reconstructions, confidence factors, and per-group cluster means therefore transform consistently. For hard routing, this conclusion assumes a unique winning prototype; an exact affinity tie is resolved by implementation order and is not covered by a strict algebraic claim.

\paragraph{Agreement-weighted residual modulation.}
The public ReCoW forward function forms
\begin{align}
  A(x)&=\frac{1+\cos(\overline{S}(x),\overline{T}(x))}{2},\\
  M(x,g)&=A(x)\,\sigma(S(x,g)+T(x,g)),\\
  F'(x,g)&=F(x,g)+F(x,g)\odot M(x,g),
  \label{eq:supp_recow}
\end{align}
where bars denote group averages and $\odot$ is componentwise multiplication. This is an agreement-weighted residual gate on the input feature. It is not the convex interpolation $\beta S+(1-\beta)T$ stated in the earlier supplement.

Because $S$ and $T$ carry the regular representation, $S+T$ and its pointwise sigmoid do as well. The scalar field $A$ is group invariant in the orientation index and spatially equivariant. Hence $M$ carries the regular representation, and sums and componentwise products of tensors carrying the same permutation action preserve that action. Under the lattice and unique-routing conditions,
\begin{equation}
  \operatorname{ReCoW}(\rho_hF)=\rho_h\operatorname{ReCoW}(F)
\end{equation}
in exact arithmetic. Floating-point reductions and the generated rotated kernels introduce small numerical residuals.

Indeed, $A(\rho_hF)(x)=A(F)(h^{-1}x)$ and $M(\rho_hF)(x,g)=M(F)(h^{-1}x,h^{-1}g)$. Substitution into Eq.~\eqref{eq:supp_recow} reindexes both multiplicands identically. This is why the implemented tensor gate is equivariant even though it is not invariant over the group axis.

\subsection{Oriented-Aware Head}
Let $F_n$ be a neck feature. The classification stack is equivariant and its group mean readout satisfies
\begin{equation}
  f_{\mathrm{cls}}(\rho_hF_n)(x)=f_{\mathrm{cls}}(F_n)(h^{-1}x).
\end{equation}
The box stack retains group slots. Its cyclic-tied weights depend only on the relative offset between a box side and an input group slot, while one bias vector is shared across all four sides. A cyclic shift of group slots therefore induces the corresponding cyclic shift of side logits. The side-shared bias is essential: four independent side biases would remain fixed while the sides permute, violating the same law after training.

For the angle branch, let $q_g(x)$ be the group-indexed logit, $\omega=2\pi/|G|$, $\delta_g=(\sigma(q_g)-\tfrac12)\omega$, and $w_g=\operatorname{softmax}_g(q_g)$. The decoder forms
\begin{equation}
  z(x)=\sum_{g\in G}w_g(x)\exp\!\left(i[\delta_g(x)+\omega\operatorname{idx}(g)]\right),
  \qquad
  \hat\theta(x)=\operatorname{atan2}(\Im z(x),\Re z(x)).
  \label{eq:supp_angle}
\end{equation}
If $z(x)\neq0$, cyclic reindexing rotates $z$ by the group angle, and the wrapped OBB angle obeys the required $\pi$-periodic shift law. If $z(x)=0$, the circular mean is mathematically undefined; at a near-zero resultant, floating-point tie breaking can dominate. The angle proposition therefore requires a nonzero resultant rather than quantifying over every possible input.
For a group element $h$, the cyclic shift gives
\begin{equation}
  z_{\rho_hF}(x)=\exp\!\left(i\omega\operatorname{idx}(h)\right)z_F(h^{-1}x).
  \label{eq:supp_angle_shift}
\end{equation}
When the right-hand side is nonzero, taking its argument yields the desired additive angle shift before $\pi$-periodic wrapping.

\subsection{End-to-End Statement}
Consider the dense predictor from lifting through the raw class, box-distribution, and angle outputs. In exact arithmetic it is $C_4$-equivariant under the following explicit conditions: (i) the spatial lattices and ReCoW window partitions are compatible with the cardinal rotations; (ii) routed tensors are finite; (iii) hard spatial routing has a unique winner; and (iv) the angle resultant in Eq.~\eqref{eq:supp_angle} is nonzero. The proof follows by composition of the group convolutions, SpeIW, aligned pooling/downsampling, ReCoW, and the three structured readouts analyzed above.

The implementation is numerically equivariant rather than bitwise exact because rotated filters are generated with floating-point sampling and because reductions and interpolation accumulate rounding error. Box decoding and non-maximum suppression operate in image coordinates and are outside this dense-predictor statement.

\paragraph{Rotation robustness evaluation.}
For Fig.~1(b) of the main paper, each MODA test image is rotated from $0^\circ$ to $359^\circ$ in $1^\circ$ increments using bilinear interpolation about the image center. The canvas is expanded to retain the rotated content and the exterior is filled with zeros. Ground-truth centers and angles are transformed consistently, while detection settings remain fixed. This experiment evaluates empirical robustness to arbitrary angles; it is distinct from the algebraic $C_4$ statement above.

\section{Datasets}
\subsection{Primary Benchmarks}
\paragraph{MODA.}
MODA is a large-scale multispectral aerial object detection dataset collected across 50 urban areas under varying illumination. It contains 14,041 images at $1200\times900$, eight spectral bands, and 330,191 oriented-box annotations over eight categories. We use the standard 9,156/4,885 train/test split.

\paragraph{HOD3K.}
HOD3K contains 3,242 natural-scene multispectral images at $512\times256$, 16 spectral bands, and 15,149 annotations over people, bike, and car. We follow the standard split of 2,308 training, 219 validation, and 715 test images.

\paragraph{DroneVehicle.}
It contains 28,439 RGB-infrared image pairs. Its 953,087 oriented annotations cover car, bus, truck, freight-car, and van. We use 17,990 pairs for training, 1,469 for validation, and 8,980 for testing.

\subsection{Extended Benchmarks for Generalization}
\paragraph{LLVIP.}
LLVIP is a well-aligned RGB-thermal pedestrian dataset captured at night from a surveillance perspective. It contains 15,488 pairs at $1280\times1024$; we use the official 12,025/3,463 train/test split.

\paragraph{VEDAI.}
VEDAI provides 1,210 aerial RGB-infrared pairs with oriented annotations over nine vehicle categories and is dominated by small objects. Following common practice in the absence of an official partition, we use a 1,089/121 train/test split.

\section{Implementation}
\subsection{Evaluation Metrics}
We use the COCO protocol and report $\mathrm{mAP}_{50}$, $\mathrm{mAP}_{75}$, and mAP averaged over IoU thresholds from 0.50 to 0.95. Rotated-rectangle IoU is used for MODA, DroneVehicle, and VEDAI; axis-aligned IoU is used for HOD3K and LLVIP. FLOPs and parameter counts include the group dimension.

\subsection{Training Settings}
Experiments use the Ultralytics YOLO training pipeline. Detailed settings and extended benchmark results begin on the following page.

\clearpage
\setcounter{page}{11}
\renewcommand{\thesection}{S\arabic{section}}
\renewcommand{\thesubsection}{S\arabic{section}.\arabic{subsection}}
\renewcommand{\thetable}{\arabic{table}}
\renewcommand{\thefigure}{\arabic{figure}}
\renewcommand{\theequation}{\arabic{equation}}
\label{sec:impl_hyper}
\begingroup
\makeatletter
\def\@currentlabel{S1}
\label{sec:prelim}
\makeatother
\endgroup
\setcounter{table}{0}
\setcounter{figure}{0}
\setcounter{equation}{26}
\setcounter{proposition}{5}

\begin{table}[!t]
\centering
\scriptsize
\setlength{\tabcolsep}{25pt}
\caption{Common training hyper-parameters of FressDet.}
\label{tab:hyper_common}
\begin{tabular}{l|c}
\toprule
\textbf{Hyper-parameter} & \textbf{Value} \\
\midrule
\midrule
Optimizer & AdamW \\
Initial learning rate & 0.01 \\
Final learning rate & 0.0001 \\
Learning rate schedule & linear decay \\
Momentum & 0.937 \\
Weight decay & 0.0005 \\
Warm-up epochs & 3 \\
Warm-up momentum & 0.8 \\
Warm-up bias learning rate & 0.1 \\
Box loss gain & 7.5 \\
Class loss gain & 0.5 \\
DFL loss gain & 1.5 \\
\bottomrule
\end{tabular}
\end{table}

\subsection{Dataset-specific Settings}
\label{sec:impl_dataset}

For multi-modal datasets (DroneVehicle, LLVIP, VEDAI), thermal images capture temperature information in the infrared spectrum and are inherently robust to illumination variations; they are thus often combined with RGB images to improve detection under poor lighting.
Existing RGBT methods predominantly exploit complementary spatial information via two-stream networks that separately extract and fuse features from RGB and thermal modalities.
In contrast, our approach integrates spectral-spatial information within a single-stream framework by concatenating RGB and thermal images along the channel dimension to form a four-channel spectral input.
Although multi-modal inputs are sparsely sampled, SpeIW learns a continuous feature representation along the spectral dimension, enabling unified spectral-spatial reasoning rather than treating each modality as an independent stream.
Unlike prior methods that rely on predefined rotation augmentations to achieve orientation robustness, FressDet leverages built-in rotation equivariance and requires no such augmentation during training, simplifying the pipeline while ensuring fair comparison across all datasets.

\nbf{(i) MODA}
Training was performed with a batch size of 8 for 20 epochs at an input size of $1216\!\times\!928$ with 8 spectral bands, following~\cite{han2025moda}.
For the comparison in Tab.~1 of the main paper, all methods are evaluated on the official MODA split under the same input resolution and the same COCO-style rotated-box evaluation protocol.
We directly use the provided 8-band MODA images as input, without additional handcrafted band preprocessing.

\nbf{(ii) HOD3K}
We used a batch size of 8 for 14 epochs at an input size of $1024\!\times\!512$ with 16 spectral bands, following~\cite{han2025moda}.

\nbf{(iii) DroneVehicle}
RGB and infrared pairs at $640\!\times\!640$ with batch size 16, following~\cite{zhu2025wavemamba} for data settings; we train for 200 epochs.
Following common practice, we use the infrared annotations as training labels due to the more comprehensive target annotations available in the IR modality~\cite{chen2024weakly}.

\nbf{(iv) LLVIP}
Visible and infrared pairs at $1280\!\times\!1024$ with batch size 4, with other settings following~\cite{han2025moda}.

\nbf{(v) VEDAI}
Infrared and visible pairs at $640\!\times\!640$ with batch size 16, following~\cite{zhu2025wavemamba} for data settings; we train for 200 epochs.

\section{Additional Experiments}
\label{sec:exp}

\subsection{Extended Benchmarks}
\label{sec:exp_extended}

\nbf{Discussion on MODA}
The gains on MODA are category-dependent rather than uniform across all classes.
FressDet achieves substantial improvements on bike (+16.7\%) and pedestrian (+26.8\%), while trailing OSSDet on truck and tricycle.
Despite this variation, FressDet achieves the highest overall mAP (54.3 vs.\ 42.7) with only 6.3\% of OSSDet's parameters, demonstrating a favorable accuracy--efficiency trade-off.
On other benchmarks, the absolute mAP gains are smaller as baseline methods already achieve relatively high accuracy.
In such regimes, FressDet's advantage is comparable or better accuracy with substantially fewer parameters and improved rotational robustness (Fig.~1(b) of the main paper).

\nbf{Results on LLVIP}
Tab.~\ref{tab:llvip} shows that FressDet achieves the best overall performance on LLVIP, reaching 65.1 mAP.
Compared with the previous best method, OSSDet, FressDet improves mAP by 1.4 points while reducing the model size from 36.6M to 2.3M parameters and lowering the computational cost from 259.3G to 34.6G FLOPs.
All LLVIP images are captured at night, making this dataset suitable for evaluating model performance under low-illuminance conditions where spatial information from visible images is severely limited.
Despite such challenging scenarios, FressDet effectively leverages the learned features to detect pedestrians that are difficult to distinguish in visible images alone, demonstrating the effectiveness and robustness of continuous spectral--spatial representation under poor illumination.
Fig.~\ref{fig:vis_llvip} shows qualitative detection results, where FressDet produces predictions that closely match the ground truth.

\begin{figure}[!t]
\centering
\includegraphics[width=\linewidth]{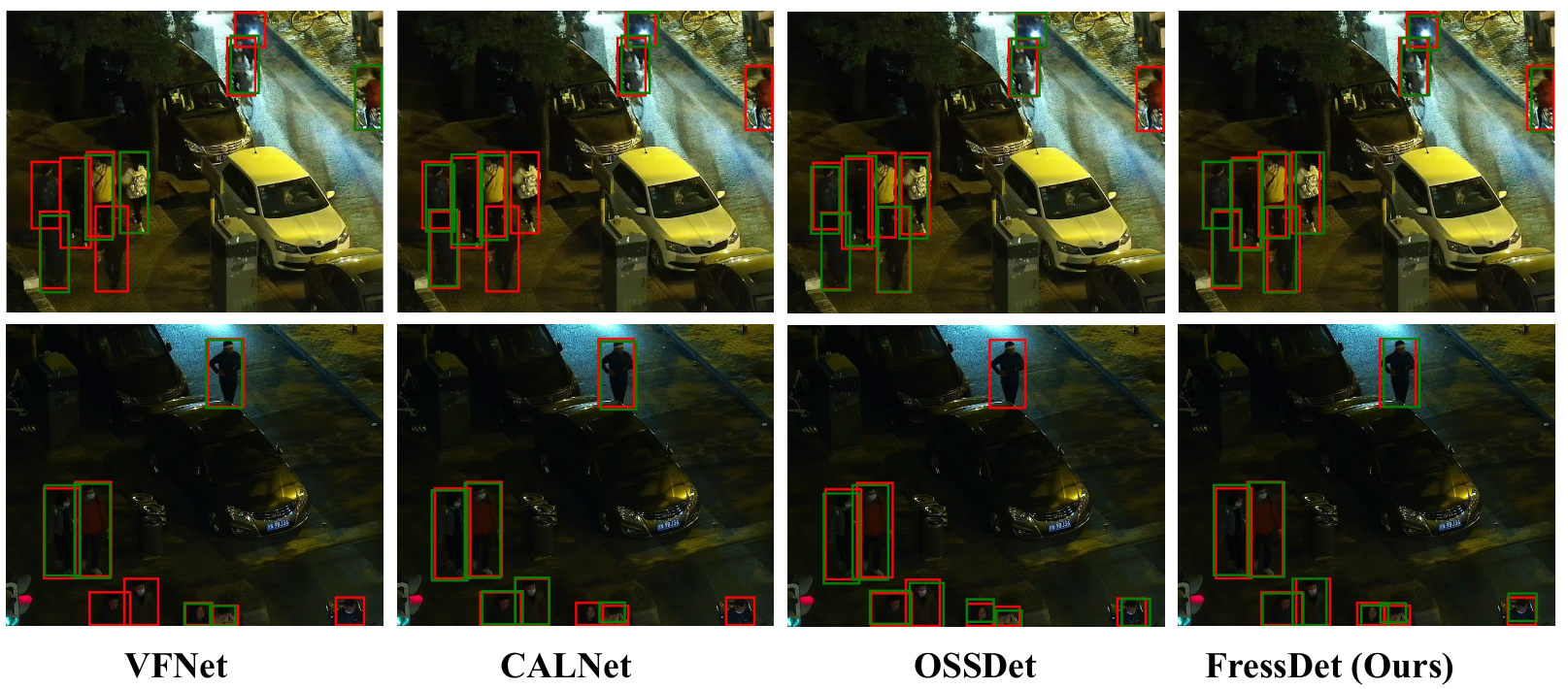}
\caption{Detection results on LLVIP. Red boxes indicate the ground truth, while green boxes denote the predictions.}
\label{fig:vis_llvip}
\end{figure}

\begin{table}[!t]
\centering
\scriptsize
\setlength{\tabcolsep}{3.5pt}
\caption{Performance comparisons on LLVIP. ``S'' denotes a single-stream framework, and ``T'' denotes a two-stream framework.}
\label{tab:llvip}
\begin{tabular}{l|ccccc}
\toprule
\textbf{Method} & \textbf{Modality} & \textbf{Type} & \textbf{mAP} & \textbf{FLOPs} & \textbf{Params} \\
\midrule
C-RCNN~\cite{cascadercnn}   & RGB    & S & 47.0 & 294.9G & 69.2M \\
VFNet~\cite{vfnet}          & RGB    & S & 47.3 & 247.8G & 32.7M \\
C-RCNN~\cite{cascadercnn}   & IR     & S & 56.8 & 294.9G & 69.2M \\
VFNet~\cite{vfnet}          & IR     & S & 61.5 & 247.8G & 32.7M \\
CSSA~\cite{cssa}            & RGB+IR & T & 59.2 & 357.1G & 66.9M \\
C-RCNN~\cite{cascadercnn}   & RGB+IR & S & 59.6 & 295.9G & 69.2M \\
RSDet~\cite{RSDet}          & RGB+IR & T & 61.3 & --     & --    \\
DCMNet~\cite{DCMNet}        & RGB+IR & T & 61.5 & 385.1G & 94.7M \\
VFNet~\cite{vfnet}          & RGB+IR & S & 61.8 & 248.8G & 32.7M \\
ImageBind~\cite{ImageBind}  & RGB+IR & T & 63.4 & --     & --    \\
CALNet~\cite{calnet}        & RGB+IR & T & 63.4 & 263.3G & 153.6M \\
OSSDet~\cite{han2025moda}   & RGB+IR & S & 63.7 & 259.3G & 36.6M \\
\midrule
\rowcolor{LightPurple}
FressDet (Ours) & RGB+IR & S & \textbf{65.1} & \textbf{34.6G} & \textbf{2.3M} \\
\bottomrule
\end{tabular}
\end{table}

\begin{table}[t]
\centering
\scriptsize
\setlength{\tabcolsep}{13.2pt}
\caption{Performance comparisons on VEDAI.}
\label{tab:vedai}
\begin{tabular}{l|ccc}
\toprule
\textbf{Method} & \textbf{mAP$_{50}$} & \textbf{mAP} & \textbf{Params} \\
\midrule
WaveMamba~\cite{zhu2025wavemamba} & 87.9 & 59.4 & 193.2M \\
LRAF-Net~\cite{fu2024lraf}        & 85.9 & 59.1 & --     \\
ICAFusion~\cite{shen2024icafusion}& 84.8 & 56.6 & 164.3M \\
SSE+FFT~\cite{xu2024ssefft}       & 86.5 & 56.9 & --     \\
ESM-YOLO~\cite{zhang2024esmyolo}  & 82.4 & --   & 80.2M  \\
DCCCNet~\cite{tian2025dcccnet}    & 82.0 & 49.9 & --     \\
CrossYOLO~\cite{nie2024crossyolo} & 79.8 & --   & --     \\
WaveMamba~\cite{zhu2025wavemamba} & 88.2 & 59.6 & 45.6M  \\
YOLOv8l-IR~\cite{yolov8}          & 73.6 & 52.2 & 43.7M  \\
YOLOv8l-RGB~\cite{yolov8}         & 62.9 & 46.5 & 43.7M  \\
MDA~\cite{zhu2025mda}             & 82.2 & --   & 72.4M  \\
DSM-AVD~\cite{zhang2024dsmavd}    & 79.3 & 50.3 & --     \\
WaveMamba~\cite{zhu2025wavemamba} & 88.4 & 59.8 & 69.1M  \\
\midrule
\rowcolor{LightPurple}
FressDet (Ours) & \textbf{89.7} & \textbf{60.2} & \textbf{2.3M} \\
\bottomrule
\end{tabular}
\end{table}

\nbf{Results on VEDAI}
Tab.~\ref{tab:vedai} summarizes the comparison results on VEDAI.
FressDet achieves the best performance with 89.7 mAP$_{50}$ and 60.2 mAP, while using only 2.3M parameters.
Compared with the strongest WaveMamba variant, FressDet improves mAP$_{50}$ by 1.3 points and mAP by 0.4 points, with substantially lower model complexity.

\subsection{Ablation Studies}
\label{sec:exp_ablation}

\nbf{Ablation on Routing Mechanism in ReCoW}
The design of ReCoW adopts soft assignment for the spectral branch and hard assignment for the spatial branch.
To validate this design choice, we conduct a systematic ablation by testing all four combinations of routing mechanisms.
Tab.~\ref{tab:routing_ablation} reports the results on MODA.

\begin{table}[t]
\centering
\scriptsize
\setlength{\tabcolsep}{2.5pt}
\caption{Ablation on routing mechanisms in ReCoW. ``Soft'' denotes cosine-based soft assignment (Eq.~10), and ``Hard'' denotes Gaussian-affinity hard assignment (Eq.~12).}
\label{tab:routing_ablation}
\begin{tabular}{cc|ccc|l}
\toprule
\textbf{Spectral} & \textbf{Spatial} & \textbf{mAP$_{50}$} & \textbf{mAP$_{75}$} & \textbf{mAP} & \textbf{Observation} \\
\midrule
Hard & Hard & 70.4 & 56.8 & 51.2 & spectral diversity lost \\
Soft & Soft & 71.6 & 57.9 & 52.4 & spatial boundaries blurred \\
Hard & Soft & 69.8 & 56.1 & 50.6 & worst of both \\
\rowcolor{LightPurple}
Soft & Hard & \textbf{73.1} & \textbf{60.2} & \textbf{54.3} & complementary strengths \\
\bottomrule
\end{tabular}
\end{table}

The soft-spectral + hard-spatial configuration achieves the best performance across all metrics.
The gap is most pronounced at mAP$_{75}$ (+2.3 over Soft-Soft), suggesting that the design primarily benefits localization precision.

Spectral signatures in multispectral imagery often exhibit continuous mixing due to sub-pixel blending.
Soft assignment allows each location to aggregate from multiple prototypes, which is consistent with this characteristic; switching to hard assignment drops mAP by 3.1 (Hard-Hard vs.\ Soft-Hard).
In contrast, spatial grouping benefits from crisp boundaries that respect object contours.
Hard assignment enforces discrete region membership, whereas soft assignment tends to blur adjacent regions---the degraded mAP$_{75}$ of Soft-Soft (57.9 vs.\ 60.2) is consistent with this interpretation.
The Hard-Soft variant performs worst (50.6 mAP), as it discards spectral diversity while blurring spatial boundaries.

\nbf{Ablation on Backbone Architecture}
To verify that the proposed SpeIW and ReCoW modules are not tied to a specific backbone, we replace SpeIWNet with ReResNet50, a rotation-equivariant ResNet variant based on e2cnn~\cite{e2cnn}.
Tab.~\ref{tab:backbone_ablation} reports the results across all five benchmarks.

\begin{table}[t]
\centering
\scriptsize
\setlength{\tabcolsep}{7.5pt}
\caption{Ablation on backbone architecture.}
\label{tab:backbone_ablation}
\begin{tabular}{l|l|ccc}
\toprule
\textbf{Dataset} & \textbf{Backbone} & \textbf{mAP$_{50}$} & \textbf{mAP} & \textbf{Params} \\
\midrule
\multirow{2}{*}{MODA~\cite{han2025moda}} & ReResNet50 & 70.8 & 52.7 & 26.1M \\
& \cellcolor{LightPurple}SpeIWNet & \cellcolor{LightPurple}\textbf{73.1} & \cellcolor{LightPurple}\textbf{54.3} & \cellcolor{LightPurple}\textbf{2.3M} \\
\midrule
\multirow{2}{*}{HOD3K~\cite{hod3k}} & ReResNet50 & 91.5 & 59.6 & 26.2M \\
& \cellcolor{LightPurple}SpeIWNet & \cellcolor{LightPurple}\textbf{93.8} & \cellcolor{LightPurple}\textbf{61.1} & \cellcolor{LightPurple}\textbf{2.3M} \\
\midrule
\multirow{2}{*}{DroneVehicle~\cite{dronevehicle}} & ReResNet50 & 78.1 & 60.2 & 26.1M \\
& \cellcolor{LightPurple}SpeIWNet & \cellcolor{LightPurple}\textbf{80.3} & \cellcolor{LightPurple}\textbf{61.6} & \cellcolor{LightPurple}\textbf{2.3M} \\
\midrule
\multirow{2}{*}{LLVIP~\cite{jia2021llvip}} & ReResNet50 & 95.4 & 63.8 & 26.1M \\
& \cellcolor{LightPurple}SpeIWNet & \cellcolor{LightPurple}\textbf{97.6} & \cellcolor{LightPurple}\textbf{65.1} & \cellcolor{LightPurple}\textbf{2.3M} \\
\midrule
\multirow{2}{*}{VEDAI~\cite{razakarivony2016vedai}} & ReResNet50 & 87.4 & 58.9 & 26.1M \\
& \cellcolor{LightPurple}SpeIWNet & \cellcolor{LightPurple}\textbf{89.7} & \cellcolor{LightPurple}\textbf{60.2} & \cellcolor{LightPurple}\textbf{2.3M} \\
\bottomrule
\end{tabular}
\end{table}

Across all datasets, SpeIWNet consistently outperforms ReResNet50 while using only 9\% of the parameters.
The performance gap is most pronounced on MODA (+2.3 mAP$_{50}$, +1.6 mAP).
ReResNet50, designed as a general-purpose equivariant backbone, lacks explicit spectral reasoning and treats each channel independently.
These results confirm that the lightweight SpeIWNet, with its spectral-aware design, is more effective for multispectral object detection than simply scaling up a generic equivariant backbone.

\section{Rotation-Equivariant DCN Variants}
\label{sec:dcn}

For the ablation in Tab.~6 of the main paper, we replace SpeIW with a family of spectral deformable operators adapted from DCN~\cite{dai2017deformable,Zhu_2019_CVPR,wang2023internimage,xiong2024efficient}.
The key adaptation is that the deformation is applied \emph{along the ordered spectral dimension}, rather than on the 2D spatial convolution grid.
This design yields a fair comparison to SpeIW: both operators resample spectral responses from rotation-equivariant group features, but the DCN-style baselines retain the \emph{discrete offset-and-interpolation} inductive bias of deformable convolution, whereas SpeIW models a continuous monotone warp over the entire spectral dimension.

\subsection{Spectral Deformable Resampling}
\label{sec:dcn_spectral}

Let $F\in\mathbb{R}^{C\times |G|\times H\times W}$ be the input group feature map, where $C$ is the number of spectral channels and $G=\mathrm{C}_{N}$ is the cyclic rotation group.
Standard DCN deforms a spatial sampling grid by learning offsets for each kernel location.
In our spectral adaptation, the regular grid is the ordered channel index set $\{0,1,\dots,C{-}1\}$.
For each spectral channel $c$, spatial location $x$, and group element $g$, we predict offsets from the input group feature using an equivariant predictor:
\begin{equation}
  \Delta_{c,k}(x,g)=\Gamma_{\Delta,k}(F)(x,g),
  \qquad k=1,\dots,K,
  \label{eq:spec_dcn_offset}
\end{equation}
where $K$ is the number of sampling points and $\Gamma_{\Delta,k}$ shares parameters across orientations.
Given a reference 1D stencil $\mathcal{R}=\{r_k\}_{k=1}^{K}$ on the spectral dimension, the deformed spectral positions are
\begin{equation}
  \phi_{c,k}(x,g)=c+r_k+\Delta_{c,k}(x,g).
  \label{eq:spec_dcn_loc}
\end{equation}
Because $\phi_{c,k}(x,g)$ is generally fractional, the sampled value is obtained by linear interpolation along the spectral dimension:
\begin{equation}
  \tilde{F}_{c,k}(x,g)
  = \mathcal{I}\!\bigl[F(x,g,\cdot);\phi_{c,k}(x,g)\bigr],
  \label{eq:spec_dcn_interp}
\end{equation}
where $\mathcal{I}[\,\cdot\,;t\,]$ denotes 1D linear interpolation at continuous spectral location $t$.
The output is then formed by aggregating the sampled values:
\begin{equation}
  \hat{F}_{c}(x,g)=\sum_{k=1}^{K} a_{c,k}(x,g)\,\tilde{F}_{c,k}(x,g).
  \label{eq:spec_dcn_agg}
\end{equation}
Eq.~\eqref{eq:spec_dcn_agg} highlights the main distinction from SpeIW.
The RE-DCN family remains a \emph{discrete deformable operator}: it starts from discrete channel anchors $c+r_k$ and uses interpolation only as a local off-grid evaluation rule.
By contrast, SpeIW parameterizes a continuous spectral warp and enforces a globally ordered mapping over the full spectral dimension.
This difference is illustrated in Fig.~\ref{fig:dcn_vs_speiw}.

\begin{figure}[t]
\centering
\includegraphics[width=\linewidth]{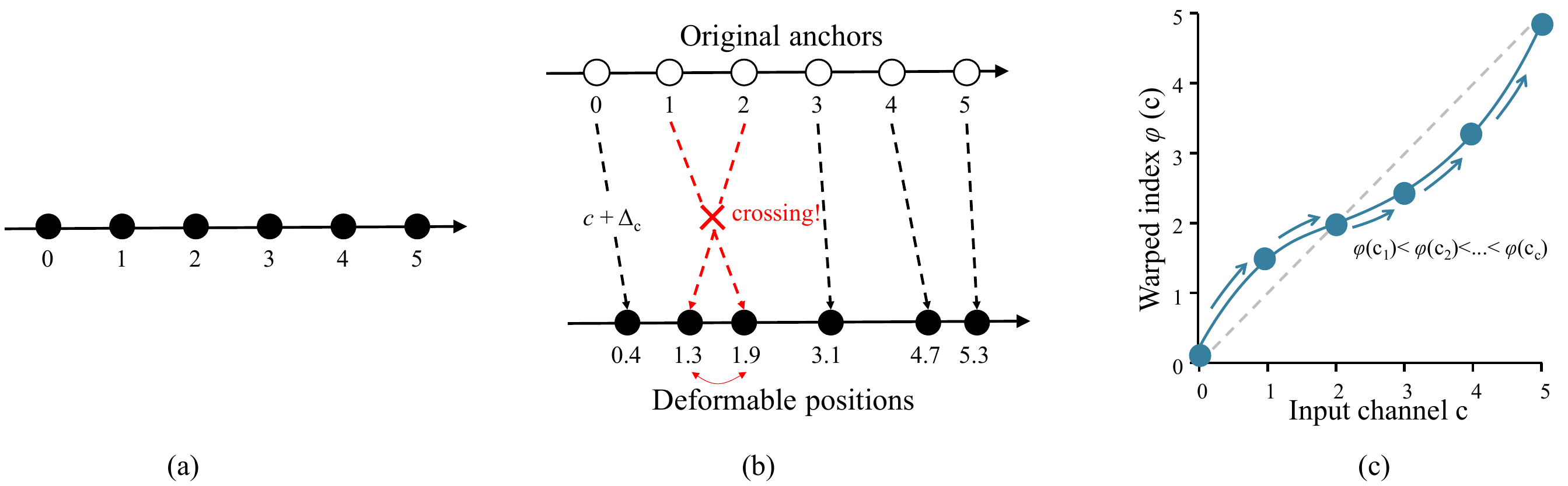}
\caption{Comparison of spectral resampling strategies. (a)~Regular spectral grid with uniform channel spacing. (b)~RE-DCN applies discrete offsets to each channel anchor; independent offsets can violate spectral ordering (red cross). (c)~SpeIW learns a continuous monotone warp that preserves spectral order by construction.}
\label{fig:dcn_vs_speiw}
\end{figure}

\subsection{Four RE-DCN Variants}
\label{sec:dcn_variants}

All four variants preserve the same backbone topology and differ only in how the aggregation weights $a_{c,k}(x,g)$ are parameterized.

\nbf{RE-DCN}
The first variant follows the original DCN formulation~\cite{dai2017deformable}.
Only the offsets are learned, and the aggregation weights reduce to the fixed spectral kernel weights:
\begin{equation}
  a_{c,k}(x,g)=w_k.
  \label{eq:re_dcn}
\end{equation}
Thus, RE-DCN performs offset-based discrete spectral sampling with shared kernel coefficients.

\nbf{RE-DCNv2}
Following DCNv2~\cite{Zhu_2019_CVPR}, we also predict a modulation scalar for each sampling point:
\begin{equation}
  a_{c,k}(x,g)=w_k\cdot m_{c,k}(x,g),
  \qquad
  m_{c,k}(x,g)=\sigma\!\bigl(\Gamma_{m,k}(F)(x,g)\bigr),
  \label{eq:re_dcnv2}
\end{equation}
where $\sigma$ is the sigmoid function.
Compared with RE-DCN, RE-DCNv2 allows the network to adjust the contribution of each deformed sampling location in an input-adaptive manner.

\nbf{RE-DCNv3}
Following the DCNv3 design used in InternImage~\cite{wang2023internimage}, we replace the modulated coefficients with input-adaptive normalized aggregation weights:
\begin{equation}
  a_{c,k}(x,g)
  =
  \frac{\exp\!\bigl(s_{c,k}(x,g)\bigr)}
       {\sum_{k'=1}^{K}\exp\!\bigl(s_{c,k'}(x,g)\bigr)},
  \label{eq:re_dcnv3}
\end{equation}
where $s_{c,k}(x,g)$ is predicted from the input feature.
This variant can be viewed as content-adaptive spectral aggregation with softmax-normalized importance over the deformed sampling points.

\nbf{RE-DCNv4}
Following DCNv4~\cite{xiong2024efficient}, we remove the softmax normalization and directly predict unconstrained aggregation weights:
\begin{equation}
  a_{c,k}(x,g)=\Gamma_{a,k}(F)(x,g).
  \label{eq:re_dcnv4}
\end{equation}
Compared with RE-DCNv3, this variant increases the flexibility of the dynamic aggregation by removing competition among sampling points.

Although these four operators differ in their weighting rules, they all share the same common structure:
predict spectral offsets from rotation-equivariant features, sample from off-grid spectral locations by interpolation, and aggregate the sampled responses.
They therefore provide a controlled comparison against SpeIW under the same rotation-equivariant backbone.

\nbf{Implementation details}
For all RE-DCN variants, we use $K\!=\!3$ sampling points with a symmetric stencil $\mathcal{R}=\{-1,0,+1\}$ centered at each spectral channel.
The offset predictor $\Gamma_{\Delta}$ is implemented as a single depthwise-separable equivariant convolution layer, matching the computational footprint of SpeIW's latent code extractor.
For RE-DCNv2/v3/v4, the additional weight predictors ($\Gamma_m$, $\Gamma_s$, $\Gamma_a$) share the same architecture.
All variants are integrated into the same SpeIWNet backbone by replacing only the spectral resampling module, and trained with identical hyper-parameters (Sec.~\ref{sec:impl_hyper}).
This ensures that performance differences in Tab.~6 of the main paper reflect the inductive bias of each spectral operator rather than architectural or optimization discrepancies.

\subsection{Why the RE-DCN Variants Remain Equivariant}
\label{sec:dcn_equiv}

The deformation in Eqs.~\eqref{eq:spec_dcn_offset}--\eqref{eq:spec_dcn_agg} acts only on the spectral dimension and is applied independently at each spatial location and orientation slot.
This is the key reason the operator remains rotation-equivariant.

\begin{proposition}[Spectral RE-DCN preserves strict $G$-equivariance]
\label{prop:redcn}
\mbox{}\par\noindent
Let $\hat{F}=\operatorname{RE\mbox{-}DCN}(F)$ denote any of the four variants above.
If the input group feature satisfies the regular transformation law
\[
  (\rho_h F)(x,g)=F(h^{-1}x,h^{-1}g), \qquad \forall h\in G,
\]
and the offset/weight predictors are built from equivariant layers with parameters shared across the group dimension, then
\begin{equation}
  \operatorname{RE\mbox{-}DCN}(\rho_h F)=\rho_h\bigl(\operatorname{RE\mbox{-}DCN}(F)\bigr),
  \qquad \forall h\in G.
  \label{eq:redcn_equiv}
\end{equation}
\end{proposition}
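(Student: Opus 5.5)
The plan is to follow the chain Eq.~\eqref{eq:spec_dcn_offset} $\to$ Eq.~\eqref{eq:spec_dcn_loc} $\to$ Eq.~\eqref{eq:spec_dcn_interp} $\to$ Eq.~\eqref{eq:spec_dcn_agg}. At each stage I will show that every quantity, viewed as a function of $(x,g)$, is simply reindexed by $(x,g)\mapsto(h^{-1}x,h^{-1}g)$ when $F$ is replaced by $\rho_hF$. Throughout, write $\tilde x=h^{-1}x$ and $\tilde g=h^{-1}g$.

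\emph{Step 1: predictors.} By hypothesis, $\Gamma_{\Delta,k}$, $\Gamma_{m,k}$, $\Gamma_{s,k}$ and $\Gamma_{a,k}$ are compositions of layers that are equivariant under the regular representation of Eq.~\eqref{eq:regular}. Examples are the depthwise--pointwise pair of Eqs.~\eqref{eq:dw}--\eqref{eq:pw}, pointwise nonlinearities, and zero padding as in Eq.~\eqref{eq:supp_padding}. Since equivariant maps compose equivariantly, each predictor satisfies $\Gamma(\rho_hF)(x,g)=\Gamma(F)(\tilde x,\tilde g)$. This gives $\Delta_{c,k}(\rho_hF)(x,g)=\Delta_{c,k}(F)(\tilde x,\tilde g)$, and the same relation holds for the weight logits.

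\emph{Step 2: positions and weights.} The stencil $c+r_k$ does not depend on $(x,g)$. Hence $\phi_{c,k}$ in Eq.~\eqref{eq:spec_dcn_loc} inherits the reindexing law directly, and so does any clamping to $[0,C-1]$, because clamping is pointwise. For the aggregation weights I check the four variants separately.
\begin{itemize}
\item RE-DCN uses the constant $w_k$, which is trivially invariant.
\item RE-DCNv2 uses a sigmoid of an equivariant field, and the sigmoid acts pointwise.
\item RE-DCNv3 uses a softmax over the stencil index $k$ only, at fixed $(x,g)$, so it commutes with the spatial--group reindexing.
\item RE-DCNv4 uses the raw output of an equivariant predictor.
\end{itemize}
In every case $a_{c,k}(\rho_hF)(x,g)=a_{c,k}(F)(\tilde x,\tilde g)$.

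\emph{Step 3: interpolation and aggregation.} This is the step I expect to need the most care. The point to isolate is that the 1D interpolation $\mathcal{I}$ reads only the spectral fiber $F(x,g,\cdot)$ at a single $(x,g)$. The relevant fiber of the transformed input is $(\rho_hF)(x,g,\cdot)=F(\tilde x,\tilde g,\cdot)$, because the regular action permutes spatial and group coordinates and leaves the channel axis untouched. The sampling location has also been reindexed in Step~2. Therefore
\[
\mathcal{I}\bigl[(\rho_hF)(x,g,\cdot);\phi_{c,k}(\rho_hF)(x,g)\bigr]=\mathcal{I}\bigl[F(\tilde x,\tilde g,\cdot);\phi_{c,k}(F)(\tilde x,\tilde g)\bigr]=\tilde F_{c,k}(F)(\tilde x,\tilde g).
\]
This mirrors the SpeIW interpolation identity from the supplementary analysis: the floor/ceil neighbors and the blend weights are functions of $\phi$ alone, so they move together with it.

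Substituting into Eq.~\eqref{eq:spec_dcn_agg}, the two factors $a_{c,k}$ and $\tilde F_{c,k}$ are reindexed identically. The sum over $k$ is taken at fixed $(x,g)$, so it gives $\hat F_c(\rho_hF)(x,g)=\hat F_c(F)(\tilde x,\tilde g)=(\rho_h\hat F_c(F))(x,g)$. This is exactly Eq.~\eqref{eq:redcn_equiv}.

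To close, I will remark that, as in the rest of the supplementary analysis, the identity is algebraic. It holds in exact arithmetic under compatible square lattices, and floating-point execution only adds round-off residuals. Unlike ReCoW and the angle head, no tie or nonzero-resultant caveat is needed, because every operation here is continuous and pointwise in $(x,g)$.
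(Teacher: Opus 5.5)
Your proposal is correct and follows essentially the same argument as the paper: the predictors inherit the regular law because they are compositions of equivariant layers; the variant-specific weights (constant, pointwise sigmoid, softmax over $k$ only, raw predictor output) are merely reindexed; and interpolation plus aggregation act only along the spectral fiber at fixed $(x,g)$, so they commute with the spatial--group reindexing. Your write-up makes the paper's terse composition argument explicit stage by stage, with one small quibble: no tie caveat is needed not because the operations are continuous, but because the floor/ceil neighbor choice depends only on $\phi$ and on the spectral axis, which the group action never permutes.
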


\textit{Proof.}
The predictors $\Gamma_{\Delta}$, $\Gamma_{m}$, and $\Gamma_{a}$ are built from equivariant layers with shared parameters across orientations, so their outputs inherit the regular representation (Sec.~\ref{sec:prelim}).
The spectral interpolation and aggregation in Eqs.~\eqref{eq:spec_dcn_interp}--\eqref{eq:spec_dcn_agg} act only along the spectral dimension at each fixed $(x,g)$; since these operations do not mix information across spatial or group dimensions, they commute with the group action.
Composing these operations yields Eq.~\eqref{eq:redcn_equiv}.

\nbf{Variant-specific remarks}
The proof applies to all four variants because their difference lies only in the aggregation weights $a_{c,k}$.
For RE-DCN, $a_{c,k}=w_k$ are fixed coefficients.
For RE-DCNv2, the modulation scalars are predicted from equivariant features and applied pointwise.
For RE-DCNv3/v4, the softmax (or its removal) operates only over the index $k$ at each $(x,g)$, commuting with the group action.

\nbf{Discussion}
The above proof also clarifies the difference between \emph{rotation equivariance} and \emph{spectral continuity}.
RE-DCN variants preserve rotation equivariance because the deformation is conditioned by equivariant features and the resampling is confined to the spectral dimension.
However, unlike SpeIW, they do \emph{not} impose a globally ordered or continuous warp across spectral channels.
Their inductive bias remains local and discrete: offsets are predicted around discrete channel anchors, and interpolation is used only to evaluate off-grid sample locations.
This difference in inductive bias explains the performance gap observed in Tab.~6 of the main paper: SpeIW retains equivariance while additionally modeling the spectral dimension as a continuous ordered field.

\begin{figure}[!t]
\centering
\includegraphics[width=\linewidth]{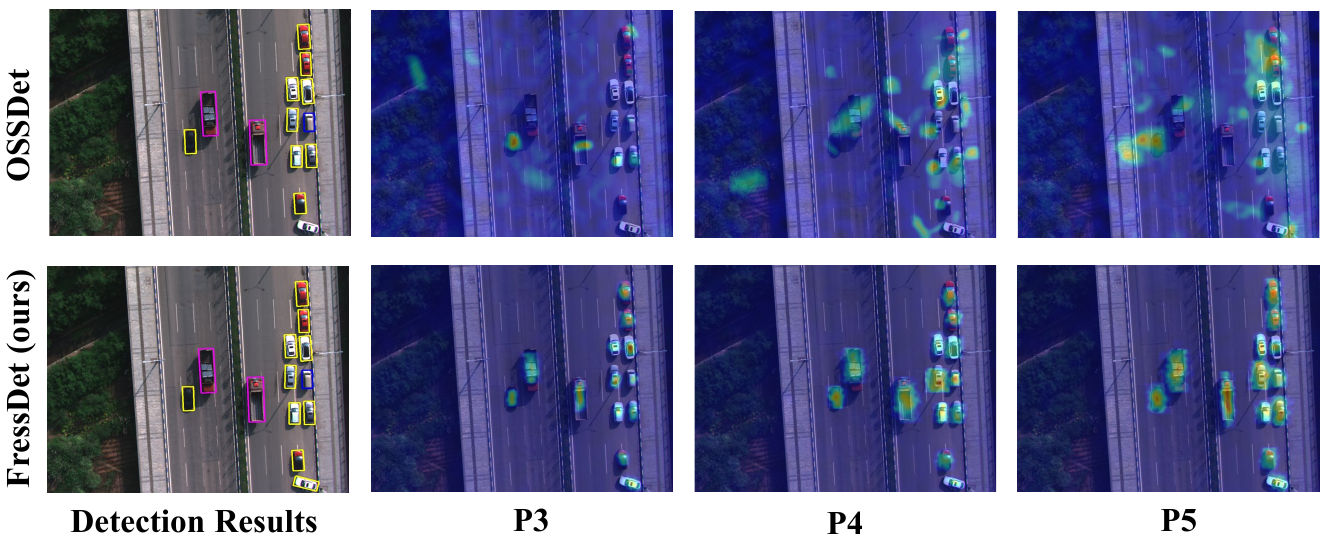}
\caption{Comparison of multiscale feature maps.}
\label{fig:multiscale}
\end{figure}

\begin{figure}[t]
\centering
\includegraphics[width=\linewidth]{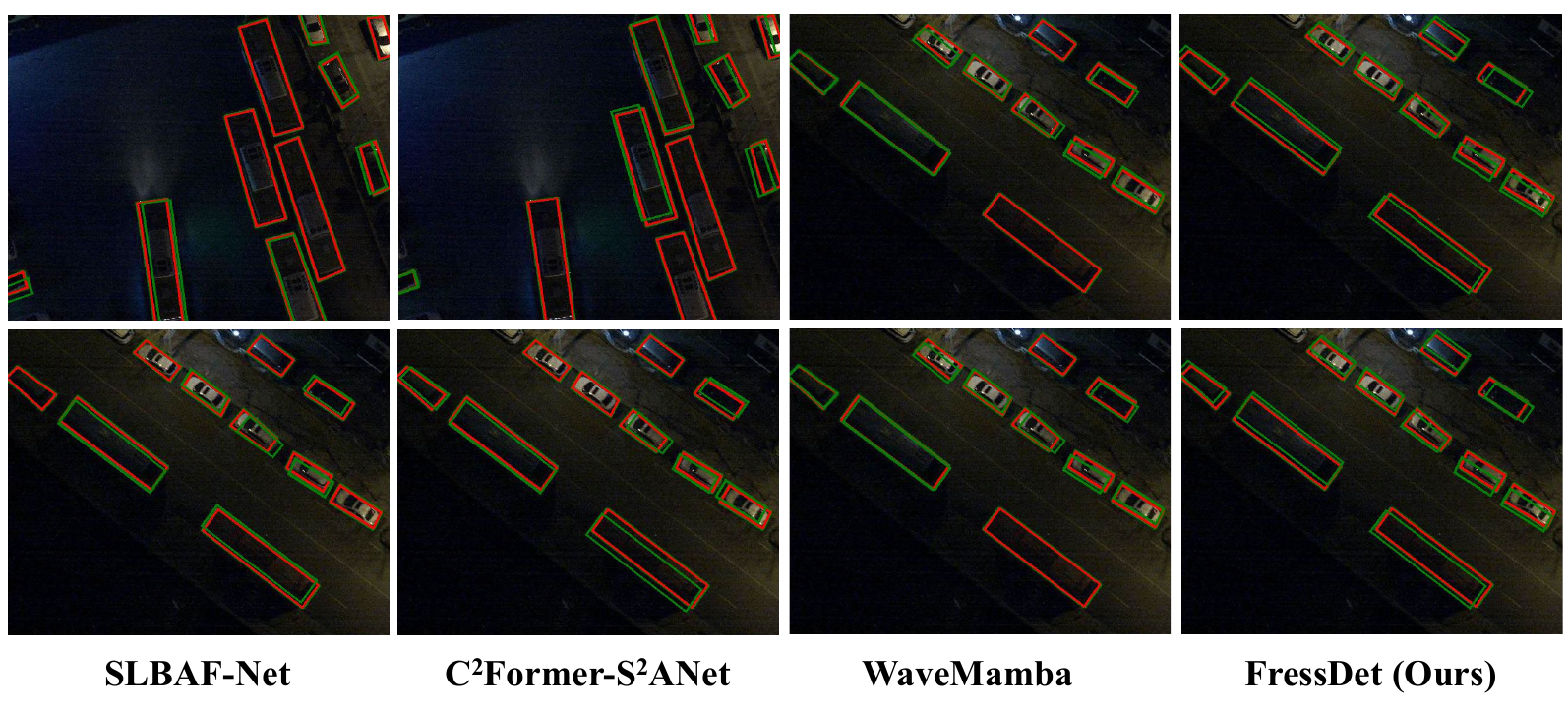}
\caption{Detection results on DroneVehicle. Red boxes indicate the ground truth, while green boxes denote the predictions.}
\label{fig:vis_dronevehicle}
\end{figure}

\section{Additional Visualizations}
\label{sec:vis}

\subsection{Multiscale Feature Visualization}
\label{sec:vis_multiscale}

We further visualize the multiscale feature maps of FressDet and competing methods, as shown in Fig.~\ref{fig:multiscale}.
FressDet propagates rotation-equivariant representations through the SpeIWNet backbone and refines them via ReCoW's adaptive spectral-spatial fusion, thereby features across scales (P3/P4/P5) exhibit consistent focus on target regions.
In contrast, baseline methods show scattered or inconsistent activations, particularly for small and densely packed objects.

\subsection{Qualitative Results on DroneVehicle}
\label{sec:vis_dronevehicle}

Fig.~\ref{fig:vis_dronevehicle} shows qualitative detection results on DroneVehicle.
FressDet accurately localizes vehicles with various orientations and scales, demonstrating robust performance in cluttered backgrounds where competing methods often produce false positives or miss small targets.

\subsection{Rotation Equivariance Visualization}
\label{sec:vis_equiv}

Fig.~\ref{fig:equiv} demonstrates the rotation equivariance property of FressDet by visualizing feature maps under input rotations of $0^\circ$, $90^\circ$, $180^\circ$, and $270^\circ$.
As the input image rotates, the corresponding feature maps rotate accordingly, confirming that FressDet maintains strict rotation equivariance throughout the network.

\noindent\begin{minipage}{\linewidth}
\centering
\includegraphics[width=0.94\linewidth]{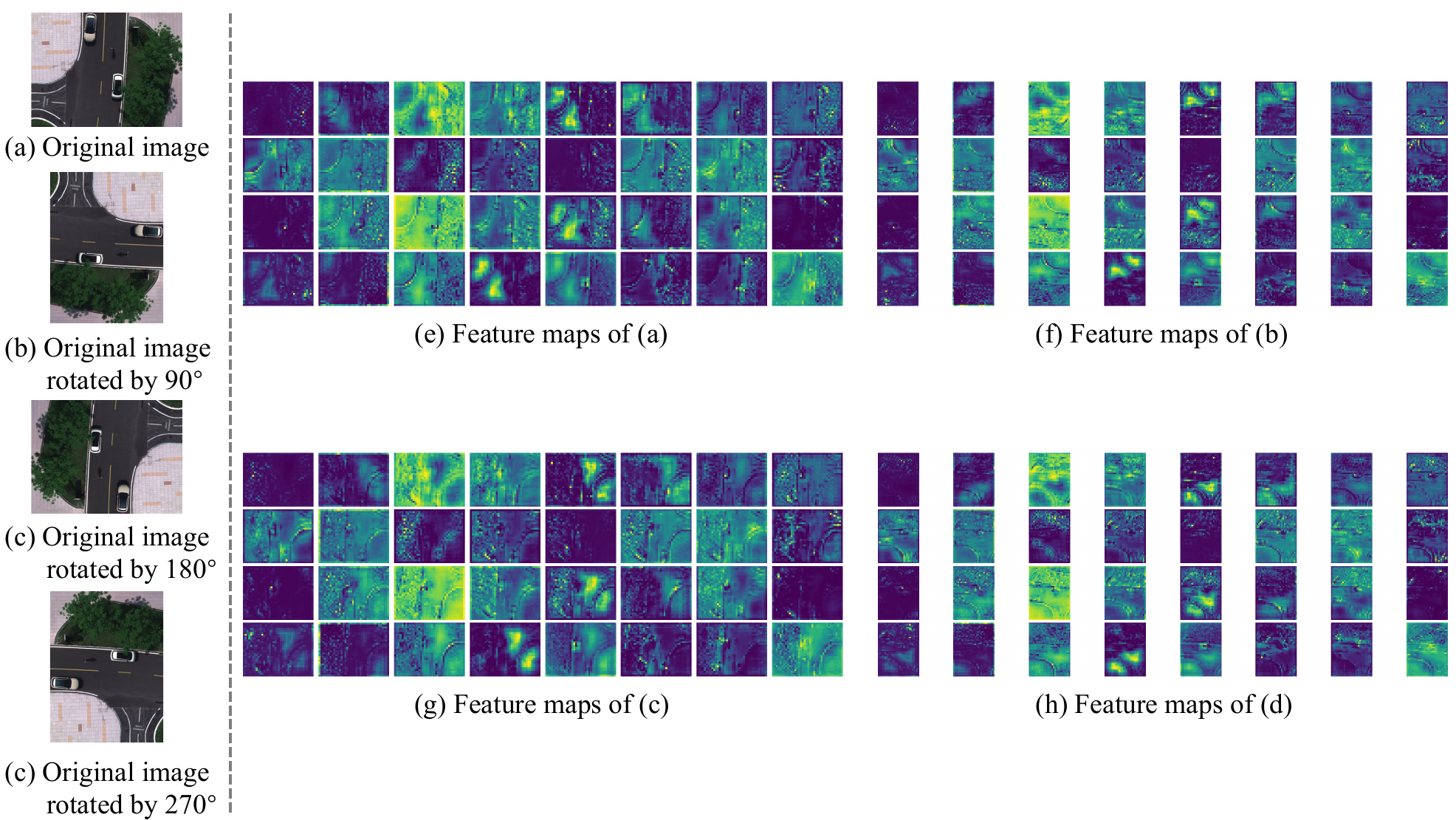}
\figcaption{Rotation-equivariance visualization of FressDet. The original image and its rotated versions at (a) $0^\circ$, (b) $90^\circ$, (c) $180^\circ$, and (d) $270^\circ$ produce correspondingly rotated feature maps in (e), (f), (g), and (h), showing the 32 output channels.}
\label{fig:equiv}
\end{minipage}

\clearpage

\putbib[main]
\end{bibunit}
\end{NoHyper}

\end{document}